\def\True{1}
\def\distribution{1}
	\def\appendixon{1}
	\def\figureson{1}
	\newcommand{\samples}{500}
	\def\separatorpages{0}
	\def\appendixon{1}
	\def\figureson{1}
	\newcommand{\samples}{25}
	\def\separatorpages{0}
\def\prepublication{0}
	\def\compilationheader{1}
	\def\compilationheader{0}
\newcommand{\ifdist}[2]{
	\ifx\distribution\True
	#1
	\else
	#2
	\fi
}
\newcommand{\separatorpage}{
	\ifx\separatorpages\True
		\newpage
	\fi	
}
\DeclareMathAlphabet{\mathpzc}{OT1}{pzc}{m}{it}
\theoremstyle{remark}
\newtheorem{remark}{Remark}[section]
\newtheorem{example}{Example}[section]
\newtheorem{corollary}{Corollary}[section]
\theoremstyle{definition}
\newtheorem{definition}{Definition}[section]
\newtheorem{lemma}{Lemma}[section]
\newtheorem{theorem}{Theorem}[section]
\newcommand{\action}{\mathpzc{a}}
\newcommand{\actionSet}{\mathpzc{A}}
\newcommand{\behavior}{B}
\newcommand{\agentdefaut}{a}
\newcommand{\agentun}{a}
\newcommand{\agentdeux}{b}
\newcommand{\agenttrois}{c}
\newcommand{\agentSet}{A}
\newcommand{\TVD}{\mathop{\text{TVD}}}
\renewcommand{\qedsymbol}{\textbf{Q.E.D.}}
\newcommand{\expectation}{\mathop{\mathbb{E}}}
\newcommand{\excoef}{\rho}
\DeclareMathOperator{\Exists}{\exists}
\DeclareMathOperator{\Forall}{\forall}
\newcommand{\vdotsSpace}{2.7em}
\newcommand{\rb}[2]{%
	\raisebox{-\ht0}{\raisebox{-\height}{%
			$\left.\vphantom{\begin{array}{c}#1\end{array}}\right\rbrace#2$%
	}}%
}
\numberwithin{equation}{subsection}
\newcommand{\figuresize}{0.75\linewidth}
\definecolor{colorone}{HTML}{9ECAE1}
\definecolor{colortwo}{HTML}{3182BD}
\newcommand{\colorone}{colorone}  
\newcommand{\colortwo}{colortwo} 
\newcommand{\colorthree}{black}  
\newcommand{\colorfour}{black}   
\newcommand{\appref}[1]{\hyperref[#1]{Appendix~\ref*{#1}}}
		\newcommand{\edit}[1]{} 
		\newcommand{\edit}[1]{\stepcounter{editnum}\textbf{\textcolor{red}{$\blacksquare$}}} 
\begin{document}
\title{Agent Spaces}
\author{John C. Raisbeck\footnote{Corresponding author, jraisbeck@mgh.harvard.edu}}
\author{Matthew W. Allen}
\author{Hakho Lee}
\affil{Center for Systems Biology\\ Massachusetts General Hospital\\ Boston, Massachusetts}
\maketitle


\begin{abstract}
	Exploration is one of the most important tasks in Reinforcement Learning, but it is not well-defined beyond finite problems in the Dynamic Programming paradigm (see \autoref{incompleteInformation}). We provide a reinterpretation of exploration which can be applied to any online learning method.
	
	We come to this definition by approaching exploration from a new direction. After finding that concepts of exploration created to solve simple Markov decision processes with Dynamic Programming are no longer broadly applicable, we reexamine exploration. Instead of extending the \emph{ends} of dynamic exploration procedures, we extend their \emph{means}. That is, rather than repeatedly sampling every state-action pair possible in a process, we define the act of modifying an agent to \emph{itself} be explorative. The resulting definition of exploration can be applied in infinite problems and non-dynamic learning methods, which the dynamic notion of exploration cannot tolerate.
	
	To understand the way that modifications of an agent affect learning, we describe a novel structure on the set of agents: a collection of distances (see \autoref{notAMetric}) $d_{\agentdefaut}, \agentdefaut \in \agentSet$, which represent the perspectives of each agent possible in the process. Using these distances, we define a topology and show that many important structures in Reinforcement Learning are well behaved under the topology induced by convergence in the agent space.
\end{abstract}
\newpage

\section{Introduction}\label{introduction} 
	Reinforcement Learning (RL) is the study of stochastic processes which are composed of a decision process~$\mathcal{P}$, and an agent~$\agentdefaut$.\edit{I don't think that this description is controversial, but I cannot find a citation which matches my description; in general, the ``tuple'' variation is used. I refuse to stoop to so low a level, so I'm a bit stuck.} The Reinforcement Learning problem of a decision process $\mathcal{P}$ with reward $R$ is to find an optimal agent $\agentdefaut^{*}$ which maximizes the expectation of a reward function $R : \Phi \rightarrow \mathbb{R}$ with respect to the distribution of paths $\Phi^{a}$ drawn from the process $(\mathcal{P}, \agentdefaut)$.\edit{As with the above, I cannot find a good external citation for this framework, even though it is entirely compatible with the literature.} Typically, Reinforcement Learning methods seek $\agentdefaut^{*}$ or otherwise high-reward agents via iterative learning processes, sometimes described as \emph{trial-and-error} \cite{Sutton2015}.
	
	In an online learning algorithm, learners send one or more agents to interact with the process and collect information on those interactions. After studying these interactions, the learner develops new agents to experiment with, seeking to improve the reward of successive generations of agents (see \autoref{defLearningAlgorithm}). In this pursuit, there is a trade-off between seeking high-quality agents during learning (exploitation) and seeking new information about the decision process (exploration).

	Exploitation, being closely related to general iterative optimization algorithms, is well-understood. In simple problems, its trade-off with exploration has been extensively researched \cite{Sutton2015, Gittins1979}. However, in more complex problems the conclusions reached by studying simple problems seem to have little bearing; some of the exploration methods which are least efficient in simple problems have been used in the most impressive demonstrations of the power of Reinforcement Learning to date \cite{Vinyals2019, OpenAI2019}.

	This paper focuses on exploration, especially on methods of exploration which ignore a certain set of the tenets of \emph{Dynamic Programming} \cite{Bellman1954, Bellman1957a}. We call these methods \emph{naïve}, and class among them \emph{Novelty Search} \cite{Lehman2011}, which we discuss in \autoref{novelty}. We begin in \autoref{reinforcementLearning} and \autoref{exploitation}, rigorously describing the problem of Reinforcement Learning and introducing the necessity for exploration. In \autoref{incompleteInformation} and \autoref{exploration}, we study the notions of exploration coming from the study of Dynamic Programming, and consider their efficacy in modern Reinforcement Learning. In \autoref{naiveExploration}, we investigate the properties of a class of spaces based on what we call \emph{primitive behavior}, and in \autoref{pseudometrics} introduce a general form of these spaces, which we call \emph{agent spaces}. The agent space can be defined in a broad class of decision processes and efficiently describes several important features of an agent. In \autoref{theAgentSpace}, it is demonstrated that the distributions of truncated paths $\phi_{t}$ are continuous in the agent, and in \autoref{continuity} it is demonstrated that certain functions of those paths (chiefly, expected reward) are continuous in the agent space.
	
\section{Reinforcement Learning}\label{reinforcementLearning}
	\subsection{Definitions}
	\begin{definition}[Decision Process]\label{defDecisionProcess}
		A discrete time \emph{decision process} $\mathcal{P}$ is a controlled stochastic process indexed by $\mathbb{N}$. Associated with the process are a set of states $S$, a set of actions $\actionSet$, and a state-transition function $\sigma$.
	\end{definition}
	\begin{definition}[State]
		A \emph{state} $s \in S$ is state possible in the decision process $\mathcal{P}$.
	\end{definition}
	\begin{definition}[Action]
		An \emph{action} $\action \in \actionSet$ is a control which an agent can exert on $\mathcal{P}$.
	\end{definition}
	\begin{definition}[Path]
		A \emph{path} (sometimes called a trajectory) $\phi \in \Phi$ in a decision process $\mathcal{P}$ is a sequence of state-action pairs generated by the interaction between the process and an agent $\agentdefaut$:
		\begin{align}
		\phi = (s_{0}, \action_{0}), (s_{1}, \action_{1}), (s_{2}, \action_{2})...
		\end{align}
	\end{definition}
	\begin{remark}
		We will sometimes need to refer to \emph{truncated paths} ($\phi_{t}$), i.e. paths which contain only the state-action pairs associated with indices $i \leq t$. This is common when referring to the domains of agents $a$ and the state transition function $\sigma$. We will see that truncated paths suffice to describe the domain of $\sigma$, but because agents act only when a new state has been generated, its domain is the set of truncated \emph{prime} paths $(\phi_{t}')$: paths which contain the $t$-th state, but not the $t$-th action.
		\begin{enumerate}
			\item $\phi_{t}=\{(s_{0}, \action_{0}), (s_{1}, \action_{1}), \ldots , (s_{t-1}, \action_{t-1}), (s_{t}, \action_{t})\}$\hfill\text{see }\autoref{paths}
			\item $\phi_{t}' = \{(s_{0}, \action_{0}), (s_{1}, \action_{1}), \ldots, (s_{t-1}, \action_{t-1}), s_{t}\}$\hfill\text{see }\autoref{primePaths}
		\end{enumerate}
		Sometimes (as in equation \eqref{sumReward}), it is necessary to refer to the $t$-th state or action of a path $\phi$. We denote the $t$-th state $\phi_{t}(s)$, and the $t$-th action $\phi_{t}(\action)$, using $s_{t}$ and $\action_{t}$ when unambiguous.
	\end{remark}
	\noindent Because $s_{0}$ has no antecedent, it is determined by the initial state distribution $s_{0} \sim \sigma_{0}$.
	\begin{definition}[Initial State Distribution]
		The \emph{initial state distribution} $\sigma_{0}$ of a process $\mathcal{P}$ is the probability distribution over $S$ which determines the first state in a path, $s_{0} = \phi_{0}(s) \sim \sigma_{0}$.
	\end{definition}
	\begin{definition}[State-Transition Function]
		The \emph{state-transition function} $\sigma$ of a process $\mathcal{P}$ is the function which takes the truncated path $\phi_{t-1}$ to the distribution of states $\sigma(\phi_{t-1})$ from which the next state $s_{t} = \phi_{t}(s)$ is drawn.
	\end{definition}\noindent
	Thus, the state at $t$ is determined by 
	\begin{align}
		s_{t} \sim
		\begin{cases}
			\sigma_{0}, & \text{if } t = 0,\\
			\sigma(\phi_{t-1}), & \text{if } t > 0.
		\end{cases}
	\end{align}
	\begin{definition}[Agent]\label{defAgent}
		An \emph{agent}\footnote{Some authors call this function a \emph{policy}, referring to its \emph{embodiment} as an agent. See \autoref{agent}, \autoref{defAgentIdentity}, and \autoref{parameterizedAgents}.} $\agentdefaut \in \agentSet$ is a function from the set of truncated paths $\Phi^{\prime} = \bigcup_{t \in \mathbb{N}}\Phi_{t}^{\prime}$ (notice that $\Phi^{\prime}$ contains only truncated paths, and $\Phi$ contains only infinite paths) of a process into the set of actions $\actionSet$ of the process:
		\begin{align}
			\agentdefaut : \bigcup_{t \in \mathbb{N}}\Phi_{t}^{\prime} \rightarrow \actionSet.
		\end{align}
	\end{definition}
	\ifx\figureson\True
	\vspace{-5ex}
	\begin{figure}[h]
		\begin{align}
		&\sbox0{\begin{tabular}[b]{c}\toprule P \\ \midrule\end{tabular}}
		\begin{tabular}{@{}c@{}c@{}}
		\begin{tabular}[t]{ c *{2}{>{$}c<{$}}}
		\toprule 
		& \multicolumn{2}{c}{Process} \\
		\cmidrule{2-3}
		0 & \sigma_{0}       & \agentdefaut(\phi_{0}') \\
		1 & \sigma(\phi_{0}) & \agentdefaut(\phi_{1}') \\
		2 & \sigma(\phi_{1}) & \agentdefaut(\phi_{2}') \\
		3 & \sigma(\phi_{2}) & \agentdefaut(\phi_{3}') \\
		4 & \sigma(\phi_{3}) & \agentdefaut(\phi_{4}') \\
		5 & \sigma(\phi_{4}) & \agentdefaut(\phi_{5}') \\
		$\vdots$ & \vdots & \vdots \\
		\end{tabular} &
		\rb{\phi_{0}'}{\phi_{0}}
		\rb{\phi_{0}' \\ \phi_{1}'}{\phi_{1}}
		\rb{\phi_{0}' \\ \phi_{1}' \\ \phi_{2}'}{\phi_{2}}
		\rb{\phi_{0}' \\ \phi_{1}' \\ \phi_{2}' \\ \phi_{3}'}{\phi_{3}}
		\rb{\phi_{0}' \\ \phi_{1}' \\ \phi_{2}' \\ \phi_{3}' \\ \phi_{4}'}{\phi_{4}}
		\rb{\phi_{0}' \\ \phi_{1}' \\ \phi_{2}' \\ \phi_{3}' \\ \phi_{4}' \\ \phi_{5}' }{\phi_{5}}
		\rb{\phi_{0}' \\ \phi_{1}' \\ \phi_{2}' \\ \phi_{3}' \\ \phi_{4}' \\ \phi_{5}' \\ x}{\phi}
		\end{tabular}
		\end{align}
		\vspace{-4ex}
		\caption{Truncated paths $\phi_{t}$ are drawn from a distribution defined by iteratively drawing states and new actions from $\sigma$ and $\agentdefaut$. The complete path $\phi$ is defined by the collection of these.}
		\label{paths}
	\end{figure}
	\fi
	\ifx\figureson\True
	\begin{figure}[!htb]
	\begin{align}
	&\sbox0{\begin{tabular}[b]{c}\toprule P \\ \midrule\end{tabular}}
	\begin{tabular}{@{}c@{}c@{}}
	\begin{tabular}[t]{ c *{2}{>{$}c<{$}}}
	\toprule
	& \multicolumn{2}{c}{Process} \\
	\cmidrule{2-3}
	0 & s_{0}       & 
	\end{tabular} &
	\rb{\phi_{0}'}{\phi_{0}' \rightarrow \action_{0} \sim a(\phi_{0}')}\vspace{-1ex}
	\end{tabular}\\
	&\sbox0{\begin{tabular}[b]{c}\toprule P \\ \midrule\end{tabular}}
	\begin{tabular}{@{}c@{}c@{}}
	\begin{tabular}[t]{ c *{2}{>{$}c<{$}}}
	\toprule
	& \multicolumn{2}{c}{Process} \\
	\cmidrule{2-3}
	0 & s_{0}       & \action_{0} \\
	\end{tabular} &
	\rb{\phi_{0}'}{\phi_{0}\rightarrow s_{1} \sim \sigma(\phi_{0})}\vspace{-1ex}
	\end{tabular}\\
	&\sbox0{\begin{tabular}[b]{c}\toprule P \\ \midrule\end{tabular}}
	\begin{tabular}{@{}c@{}c@{}}
	\begin{tabular}[t]{ c *{2}{>{$}c<{$}}}
	\toprule
	& \multicolumn{2}{c}{Process} \\
	\cmidrule{2-3}
	0 & s_{0}       & \action_{0} \\
	1 & s_{1} &  \\
	\end{tabular} &
	\rb{\phi_{0}'}{\phi_{0}}
	\rb{\phi_{0}' \\ \phi_{1}'}{\phi_{1}' \rightarrow \action_{1} \sim a(\phi_{1}')}\vspace{-1ex}
	\end{tabular}\\
	&\sbox0{\begin{tabular}[b]{c}\toprule P \\ \midrule\end{tabular}}
	\begin{tabular}{@{}c@{}c@{}}
	\begin{tabular}[t]{ c *{2}{>{$}c<{$}}}
	\toprule
	& \multicolumn{2}{c}{Process} \\
	\cmidrule{2-3}
	0 & s_{0}       & \action_{0} \\
	1 & s_{1} & \action_{1} \\
	\end{tabular} &
	\rb{\phi_{0}'}{\phi_{0}}
	\rb{\phi_{0}' \\ \phi_{1}'}{\phi_{1} \rightarrow s_{2} \sim \sigma(\phi_{1})}\vspace{-1ex}
	\end{tabular}\\
	&\sbox0{\begin{tabular}[b]{c}\toprule P \\ \midrule\end{tabular}}
	\begin{tabular}{@{}c@{}c@{}}
	\begin{tabular}[t]{ c *{2}{>{$}c<{$}}}
	\toprule
	& \multicolumn{2}{c}{Process} \\
	\cmidrule{2-3}
	0 & s_{0}       & \action_{0} \\
	1 & s_{1} & \action_{1} \\
	2 & s_{2} &  \\
	\end{tabular} &
	\rb{\phi_{0}'}{\phi_{0}}
	\rb{\phi_{0}' \\ \phi_{1}'}{\phi_{1}}
	\rb{\phi_{0}' \\ \phi_{1}' \\ \phi_{2}'}{\phi_{2}' \rightarrow \action_{2} \sim a(\phi_{2}')}\vspace{-1ex}
	\end{tabular}\\
	&\sbox0{\begin{tabular}[b]{c}\toprule P \\ \midrule\end{tabular}}
	\begin{tabular}{@{}c@{}c@{}}
	\begin{tabular}[t]{ c *{2}{>{$}c<{$}}}
	\toprule
	& \multicolumn{2}{c}{Process} \\
	\cmidrule{2-3}
	0 & s_{0}       & \action_{0} \\
	1 & s_{1} & \action_{1} \\
	2 & s_{2} & \action_{2}\\
	\end{tabular} &
	\rb{\phi_{0}'}{\phi_{0}}
	\rb{\phi_{0}' \\ \phi_{1}'}{\phi_{1}}
	\rb{\phi_{0}' \\ \phi_{1}' \\ \phi_{2}'}{\phi_{2} \rightarrow s_{3} \sim \sigma(\phi_{2})}\vspace{-1ex}
	\end{tabular}\\
	&\sbox0{\begin{tabular}[b]{c}\toprule P \\ \midrule\end{tabular}}
	\begin{tabular}{@{}c@{}c@{}}
	\begin{tabular}[t]{ c *{2}{>{$}c<{$}}}
	\toprule
	& \multicolumn{2}{c}{Process} \\
	\cmidrule{2-3}
	0 & s_{0}       & \action_{0} \\
	1 & s_{1} & \action_{1} \\
	2 & s_{2} & \action_{2}\\
	3 & s_{3} & \\
	\end{tabular} &
	\rb{\phi_{0}'}{\phi_{0}}
	\rb{\phi_{0}' \\ \phi_{1}'}{\phi_{1}}
	\rb{\phi_{0}' \\ \phi_{1}' \\ \phi_{2}'}{\phi_{2}}
	\rb{\phi_{0}' \\ \phi_{1}' \\ \phi_{2}' \\ \phi_{3}'}{\phi_{3}' \rightarrow \action_{3} \sim a(\phi_{3}')}
	\end{tabular}
	\end{align}
	\vspace{-1ex}
	\begin{displaymath}
	\vdots\hspace{\vdotsSpace}\vdots\hspace{\vdotsSpace}\vdots\hspace{\vdotsSpace}\vdots\hspace{\vdotsSpace}\vdots\hspace{\vdotsSpace}\vdots\hspace{\vdotsSpace}\vdots\hspace{\vdotsSpace}\vdots
	\end{displaymath}
		\vspace{-4ex}
		\caption{The structure of a decision process.}
		\label{primePaths}
		\vspace{-2ex}
	\end{figure}
	\fi

	Agents are judged by the quality of the control which they exert, as measured by the expectation of the reward of their paths.
	\begin{definition}[Reward]\label{defReward}
		A \emph{reward} function is a function from the set of paths $\Phi$ of a process $\mathcal{P}$ into the real numbers $\mathbb{R}$:
		\begin{align}
			R : \Phi \rightarrow \mathbb{R}\label{defRewardEquation}
		\end{align}
		Often \cite{Sutton2015}, $R$ can be described as a sum:
		\begin{align}
			R(\phi) =& \sum_{t \in \mathbb{N}} r(\phi_{t}(s), \phi_{t}(\action))\text{,}\hspace{0.75em}\text{or}\label{sumReward}\\
			R(\phi) =& \sum_{t \in \mathbb{N}} r(\phi_{t}(s), \phi_{t}(\action))\omega(t),
		\end{align}
		where $r : S \times \actionSet \rightarrow \mathbb{R}$ is an immediate \cite[49]{Watkins1989} reward function, and $\omega:\mathbb{N}\rightarrow[0,1]$ is a discount function such that
		\begin{align}
			\Omega = \sum_{t \in \mathbb{N}}\omega(t) < \infty.
		\end{align}
	\end{definition}
	
	\begin{definition}[Expected Reward]
		The \emph{expected reward} $J$ is the expectation of the reward of paths $\phi$ drawn from the distribution of paths $\Phi^{\agentdefaut}$ generated by the process $(\mathcal{P}, \agentdefaut)$:
		\begin{align}
			J(\agentdefaut) = \expectation_{\phi \sim \Phi^{\agentdefaut}}\left[R(\phi)\right]\label{defExpectedRewardEquation}
		\end{align}
	\end{definition}

	Reinforcement Learning is concerned with the \emph{optimal control} of decision processes, pursuing an optimal agent $\agentdefaut^{*}$ which achieves the greatest possible expected reward $J(\agentdefaut)$. To this end, \emph{learning algorithms} (see \autoref{learningAlgorithms}) are employed. Many algorithms pursue such agents by searching the agents which are ``near'' the agents which they have recently considered. In some contexts, this suffices to guarantee a solution in finitely many steps (see \autoref{PIT}). When this cannot be guaranteed, the problem is often restated in terms of discovering ``satisfactory'' agents, or discovering the highest quality agent possible under certain constraints.
	
	\begin{remark}[Table of Notation]
		We will frequently reference states, actions, paths, and sets and distributions of the same:\footnote{Without superscripts, $\Phi$ represents the set of possible paths. Note that only truncated paths can be prime.}
		
		{\centering\vspace{1.5ex}
			\renewcommand{\arraystretch}{1.5}
			\begin{tabular}{c|c|c|c}
				& Individual & \hspace{1.75em}Set\hspace{1.75em} & Distribution\\
				\hline
				State & $s$ & $S$ & $\sigma(\phi_{t})$\\\hline
				Action & $\action$ & $\actionSet$ & $\agentdefaut(\phi_{t}')$\\\hline
				Path & $\phi$ & $\Phi$ & $\Phi^{\agentdefaut}$\\\hline
				Prime Path & $\phi_{t}^{\prime}$ & $\Phi_{t}^{\prime}$ & $\Phi_{t}^{\prime \agentdefaut}$
			\end{tabular} 
			\vspace{1.5ex}
		\par}
	\end{remark}
	
	Often, the decision processes in Reinforcement Learning are assumed to be \emph{Markov} Decision Processes (MDPs); decision processes which satisfy the Markov property.
	\begin{definition}[Markov Property]\label{defMarkovProperty}
		A decision process has the \emph{Markov Property} if, given the prior state-action pair $(s_{t-1}, \action_{t-1})$, the distribution of the state-action pair $(s_{t}, \action_{t})$ is independent of the pairs $(s,\action)\in\{(s_{i},\action_{i})\,|\,i<t-1\}$.
	\end{definition}
	\begin{remark}[A Simple Condition for the Markov Property]\label{strictlyMarkov}
		Sometimes a slightly stricter definition of the Markov property is used: a decision process is guaranteed to be Markov if $\sigma$ is a function of $(s_{t}, \action_{t})$ and $\agentdefaut$ is a function of $s_{t+1}$ alone (which is itself a function of $(s_{t}, \action_{t})$). We call such agents \emph{strictly} Markov:
	\end{remark}
	\begin{definition}[Strictly Markov Agent]
		A \emph{strictly Markov agent} $\agentdefaut \in \agentSet$ is a function from the set of states $S$ into the set of actions $\actionSet$:
		\begin{align}
		\agentdefaut : S \rightarrow \actionSet.
		\end{align}
	\end{definition}

	\subsection{Computational considerations}
		When studying modern Reinforcement Learning, it is important to keep the constraints of practice in mind. Among the most important of these is the quantification of states and actions. In practice, the sets of states and actions are typically real (${S\subset\mathbb{R}^{m}, \actionSet\subset\mathbb{R}^{n}}$), finite ($|S|\in\mathbb{N}, |\actionSet|\in\mathbb{N}$), or a combination thereof.
		
		\subsubsection{The Agent}\label{agent}
			Agents can be represented in a variety of ways, but most modern algorithms represent agents with real function approximators (often neural networks) parameterized by a set of real numbers $\theta$,
			\begin{align}
				\agentdefaut_{\theta} : \mathbb{R}^{m} \rightarrow \mathbb{R}^{n}.
			\end{align}
			This is true even in cases where the sets of states or actions are not real. To reconcile an approximator's domain and range with such sets, the approximator is fit with an \emph{input}~function~$I$ and an \emph{output}~function~$O$, which mediate the interaction between a process and a function~approximator:
			\begin{align}
				I &: \bigcup_{t \in \mathbb{N}}\Phi_{t}^{\prime} \rightarrow \mathbb{R}^{m},\\
				O &: \mathbb{R}^{n} \rightarrow \actionSet.
			\end{align}
			When composed with these functions, the approximator forms an agent:
		 	\begin{align}
		 		\agentdefaut = O \circ \agentdefaut_{\theta} \circ I.
			\end{align}
	
		\subsubsection{The Output Function}\label{outputFunction}
			The choice of output function is important and requires consideration of several aspects of the process, including the learning process and the action set $\actionSet$. One of the most consequential roles of an output function $O$ occurs in processes where $|\actionSet|$ is finite. In such processes, the output function must map a real vector to a discrete action.
			
			Some learning algorithms, such as $\mathcal{Q}$-Learning, invoke the action-value function $\mathcal{Q}$ (see \autoref{defActionValueFunction}) in their learning process, training an agent to estimate the expected reward of each action at each state, given an agent $\agentdefaut$ \cite[81]{Bellman1962}. Because the object of approximation in $\mathcal{Q}$-learning is defined analytically, some output functions are less sensible than others. Other algorithms, like policy gradients, have outputs with less fixed meanings. Consequently, they can accommodate a variety of output functions, and the output function has a substantial effect on the agent itself. Let us consider some of the most common output functions. For further discussion of output functions, see \appref{undirected}.
			\begin{definition}[Greedy Action Sampling]\label{greedy}
				In Greedy Action Sampling, the function approximator's output is taken to indicate a single ``greedy'' action, and this greedy action is taken\cite{Sutton2015}. In problems with finite sets of actions, the range of the parameterized agent is real ($\agentdefaut_{\theta}(\phi_{t}^{\prime}) \in \mathbb{R}^{|\actionSet|}$) and the action associated with the dimension of greatest magnitude (the \emph{greedy} action) is taken.
			\end{definition}
			
			\begin{definition}[$\varepsilon$-Greedy Sampling]\label{defEpsilonGreedy}
				In $\varepsilon$-Greedy Sampling, $\varepsilon \in [0,1]$, the greedy action is taken with probability $1 - \varepsilon$. Otherwise, an action is drawn from the uniform distribution on $\actionSet$.\cite{Sutton2015}
			\end{definition}\noindent
			Unlike the greedy sampling methods above, Thompson sampling requires a finite set of actions.
			\begin{definition}[Thompson Sampling]
				In Thompson Sampling, the range of the parametrized agent $\agentdefaut_{\theta}$ is $\{x \in \mathbb{R}^{|\actionSet|}| \sum_{i\in |\actionSet|} x_{i} = 1, x_{i} > 0\}$, and the action is selected by drawing from the random variable which gives action $\action_{i}$ probability $x_{i}$ \cite{Agrawal2013}.
			\end{definition}
		
		\subsection{Learning Algorithms}\label{learningAlgorithms}
			We are concerned primarily with \emph{online} learning algorithms, in which exploration is most sensible. Whereas most dynamic programming based methods study only a single agent in each epoch of training, we consider a broader class of learning algorithms:
			\begin{definition}[Learning Algorithm]\label{defLearningAlgorithm}
				A \emph{learning algorithm} (sometimes \emph{optimizer}) is an algorithm which generates sets of candidate agents $\agentSet_{n}$, observes their interactions with a process, and uses this information to generate a  set of candidates $\agentSet_{n+1}$ in the next epoch. This procedure is repeated for each epoch $n \in I$ to improve the greatest expected reward among considered agents,
				\begin{align}
				\sup_{n\in I}\sup_{\agentdefaut\in \agentSet_{n}}J(\agentdefaut).\label{supsup}
				\end{align}
			\end{definition}
			\begin{remark}[Loci]
				Many learning algorithms have an additional property which can simplify discussions of learning algorithms: they center  generations of agents $\agentSet_{n}$ around a locus agent, or locus of optimization, denoted $\agentdefaut_{\text{locus}}$.
			\end{remark}
			Not every interesting algorithm falls within this definition (e.g. evolutionary methods with multiple loci), but many of the discussions in this paper apply, \emph{mutatis mutandis}, to a broader class of methods.
			
		\subsection{Interpreting the Reinforcement Learning Problem}\label{aNote} 
			Many learning algorithms are influenced by a philosophy of optimization called \emph{Dynamic Programming}~\cite{Bellman1954}~(DP). Dynamic programming approaches control problems state by state, determining the best action for each state according to an approximation of the \emph{state-action value function} $\mathcal{Q}^{\agentdefaut}(s,\action)$ (see \autoref{defActionValueFunction}). DP has compelling guarantees in finite\footnote{An MDP is called \emph{finite} if $|\actionSet| < \aleph_{0}$ and $|S| < \aleph_{0}$.} strictly Markov decision processes \cite{Watkins1992}. However, because DP methods require every \cite{Watkins1989, Howard1960} action value be approximated during each step of optimization, these guarantees cannot transfer to infinite problems.
			
			Although DP is dominant in Reinforcement Learning \cite[73]{Sutton2015}, several trends indicate that it may not be required for effective learning. Broadly, the problems of interest in Reinforcement Learning have shifted substantially from those Bellman considered in the early 1950s \cite{Bellman1962}. Among other changes, the field now studies many infinite problems, agents are not generally tabular, and the problems of interest are not in general Markov. Each of these changes independently prevents a problem from satisfying the requirements of dynamic programming \cite{Howard1960}. Simultaneously, approaches not based on DP, like Evolution Strategies \cite{Salimans2017} have shown that DP methods are not necessary to achieve performance on par with modern dynamic programming based methods in common reinforcement learning problems. Taken together, these trends indicate that DP may not be uniquely effective.
			
			This paper considers exploration in the light of these developments. We distinguish DP methods like $\mathcal{Q}$-Learning, which treat reinforcement learning problems as collections of subproblems, one for each state, from methods like Evolution Strategies \cite{Salimans2017}, which treat reinforcement learning problems as unitary. We call the former class \emph{dynamic}, and the latter class \emph{naïve}.
		
\section{Exploitation and its discontents}\label{exploitation}
	The study of Reinforcement Learning employs several abstractions to describe and compare the dynamics of learning algorithms which may have little in common. One particularly important concept is a division between two supposed tasks of learning: \emph{exploitation} and \emph{exploration}. Exploitation designates behavior which targets greater reward in the short term and exploration designates behavior which seeks to improve the learner's understanding of the process. Frequently, these tasks conflict; learning more (exploration) can exclude experimenting with agents which are estimated to obtain high reward (exploitation) and vice versa. The problem of balancing these tasks is known as the \emph{Exploration versus Exploitation} problem. Let us consider exploitation, exploration, and their conflicts.
	
	\subsection{Exploitation}
		Exploitation can be defined in a version of the Reinforcement Learning problem in which the goal is to maximize the \emph{cumulative} total of rewards obtained by all sampled agents: let $\phi^{\agentdefaut}$ be a path sampled from $(\mathcal{P}, \agentdefaut)$
		\begin{align}
			\sum_{\agentdefaut \in \left(\bigcup_{n \in I} \agentSet_{n}\right)}R(\phi^{\agentdefaut}).
		\end{align}
		\begin{definition}[Exploitation]
			A learning method is exploitative if its purpose is to increase the reward accumulated by the agents considered in the epochs $I$.
		\end{definition}
		
		Fortunately, this definition of exploitation remains relevant to our problem because it informs the cumulative version of the problem. Exploitation advises the dispatch of agents which are expected to be high-quality, rather than those whose behavior and quality are less certain. In effect, exploitation is conservative, preferring ``safer'' experimentation and incremental improvement to potentially destructive exploration. While exploitation is crucial to reinforcement learning, explorative experiments are often necessary because \emph{sometimes, exploitation alone fails}.
	
	\subsection{Exploitation Fails}\label{exFails}
		When exploitation fails, it is because its conservatism causes it to become stuck in local optima. Because Reinforcement Learning methods change the locus agent gradually, and exploitative methods typically generate sets of agents $A_{n}$ near the locus, exploitative learners can become trapped in local optima. \autoref{unid} provides a unidimensional representation of this problem: because the agents $a \in A_{n}$ fall within the sampling radius, and updates to the locus agent are generally interpolative, the product of the learning process depends exclusively on the initial locus and the sampling distance.
		
		\ifx\figureson\True
		\vspace{1ex}
		\begin{figure}
			{\centering
				\def\FunctionF(#1){1/3*(-8/945*(#1)^6 - 181/10080*(#1)^5 + 9761/30240*(#1)^4 + 7001/10080*(#1)^3 - 16805/6048*(#1)^2 - 3095/504*(#1))}%
				\def\FunctionG(#1){\FunctionF(#1+0.5)+2}
				\begin{tikzpicture}
					\begin{axis}[
						restrict y to domain=-20:20,
						axis y line=center,
						axis x line=middle, 
						axis on top=false,
						xmin=-6.5,
						xmax=6.5,
						ymin=-6.5, 
						ymax=6.5, 
						height=\figuresize,
						width=\figuresize,
						grid=both,
						ytick={-6,-4,...,6},
						xtick={-6,-4,...,6},
						minor tick num=1,
						major tick style={color=black, line width=1pt},
						grid style={line width=.1pt, draw=black!10},
						major grid style={line width=0.6pt, color=black!25},
						axis line style={latex-latex} 
						]
						\addplot [name path=f, domain=-6.5:6, samples=\samples, mark=none, line width=0.5pt] {\FunctionG(x)};		

						\path[name path=axis] (axis cs:-6.1,-6.5) -- (axis cs:4.8,-6.5);
						\addplot [
						thick,
						color=\colorone,
						fill=\colorone, 
						fill opacity=0.7
						]
						fill between[
						of=f and axis,
						soft clip={domain=-6.5:-5},
						];

						\addplot [
						thick,
						color=\colortwo,
						fill=\colortwo, 
						fill opacity=0.7
						]
						fill between[
						of=f and axis,
						soft clip={domain=-5:-3.4},
						];

						\addplot [
						thick,
						color=\colorone,
						fill=\colorone, 
						fill opacity=0.7
						]
						fill between[
						of=f and axis,
						soft clip={domain=-3.4:-1.45},
						];
						
						\addplot [
						thick,
						color=\colortwo,
						fill=\colortwo, 
						fill opacity=0.7
						]
						fill between[
						of=f and axis,
						soft clip={domain=-1.45:1.65},
						];
		
						\addplot [
						thick,
						color=\colorone,
						fill=\colorone, 
						fill opacity=0.7
						]
						fill between[
						of=f and axis,
						soft clip={domain=1.65:3.95},
						];
						
						\addplot [
						thick,
						color=\colortwo,
						fill=\colortwo, 
						fill opacity=0.7
						]
						fill between[
						of=f and axis,
						soft clip={domain=3.95:6.5},
						];

						\path[name path=locus] (axis cs: 0.3,-6.5) -- (axis cs:0.3,6.5);
						\fill [name intersections={of=f and locus,by=E}, color=\colorthree] (E) circle[radius=3pt];
						
						\path[name path=left] (axis cs: -0.2,-6.5) -- (axis cs:-0.2,6.5);
						\addplot +[mark=none, line width=0.5pt, color=\colorthree] coordinates {(-0.2,1.3) (-0.2,-3)};
						\fill [name intersections={of=f and left,by=Eleft}, color=\colorthree] (Eleft) circle[radius=2pt];

						\path[name path=right] (axis cs: 0.8,-6.5) -- (axis cs:0.8,6.5);
						\addplot +[mark=none, line width=0.5pt, color=\colorthree] coordinates {(0.8,-1.45) (0.8,-3)};
						\fill [name intersections={of=f and right,by=Eright}, color=\colorthree] (Eright) circle[radius=2pt];
						
						\addplot [mark=none, line width=1pt, color=\colorthree] coordinates {(-0.2,-3) (0.8,-3)};
						\addplot [mark=none, line width=1pt, color=\colorthree] coordinates {(-0.2,-2.9) (-0.2,-3.1)};
						\addplot [mark=none, line width=1pt, color=\colorthree] coordinates {(0.8,-2.9) (0.8,-3.1)};

						
						\node[color=\colorfour] at (axis cs: -5.85, 2.7) {$J(x)$};
						\node[color=\colorfour] at (axis cs: -5, 3.35) {\textbf{a}};
						\node[color=\colorfour] at (axis cs: -1.5, 3.35) {\textbf{b}};
						\node[color=\colorfour] at (axis cs: 3.95, 5.1) {\textbf{c}};
						\node[color=\colorfour] at (axis cs: -3.4, 2.1) {\textbf{p}};
						\node[color=\colorfour] at (axis cs: 1.65, -2.2) {\textbf{q}};
						\node[color=\colorfour] at (axis cs: 1.1, 0.3) {Locus};
						\node[color=\colorfour] at (axis cs: 0.33, -3.45) {Sampling Radius};
					\end{axis}
				\end{tikzpicture}\\
				\par}
			\caption{A depiction of the reward of agents parameterized by a real number $x$. Even though \textbf{c} has higher expected reward, it is isolated from the locus. Within the sampling radius, the trend is clear: lower values of $x$ produce greater reward. Thus, a purely exploitative method will end at \textbf{b}.}
			\label{unid}
			\vspace{-2ex}
		\end{figure}
		\fi
		
		\subsubsection{The Policy Improvement Theorem}
		No discussion of local optima in Reinforcement Learning would be complete without a discussion of the Policy Improvement Theorem \cite{Howard1960, Watkins1989}. Let us begin with a definition:
		\begin{definition}[Action Value Function ($\mathcal{Q}$)]\label{defActionValueFunction}
			Let $\mathcal{P}$ be a strictly Markov decision process (\autoref{strictlyMarkov}). Then, the \emph{Action Value Function} is the expectation of reward\footnote{Reward is assumed to be summable \eqref{sumReward}, and $\omega(t)$ is assumed to be exponential, $\omega(t) = \gamma^{t}, 0 \leq \gamma < 1$.}, starting at state $s$, taking an action $\action$, and continuing with an agent $a$: \cite{Watkins1989}
			\begin{align}
				\mathcal{Q}^{\agentdefaut}(s, \action)=\expectation\left[R(\phi)\hspace{0.3em}|\hspace{0.3em}s_{0} = s,\hspace{1em} \action_{0} = \action, \hspace{1em} \Forall t \in \mathbb{Z}^{+} (\action_{t} \sim \agentdefaut(s_{t-1}))\right].
			\end{align}
		\end{definition}
		
		\begin{theorem}[Policy Improvement Theorem]\label{PIT}
			For any agent $\agentdefaut$ in a finite, discounted, bounded-reward strictly Markov decision process, either there is an agent $\agentdefaut'$ such that
			\begin{align}
				\Forall& s \in S \big(\mathcal{Q}^{\agentdefaut}(s,\agentdefaut'(s))\geq \mathcal{Q}^{\agentdefaut}(s,\agentdefaut(s))\big) \land \Exists s \in S\left(\mathcal{Q}^{\agentdefaut}(s,\agentdefaut'(s)) > \mathcal{Q}^{\agentdefaut}(s,\agentdefaut(s))\right)
			\end{align}
			or $\agentdefaut$ is an optimal agent.
		\end{theorem}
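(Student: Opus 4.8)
The plan is to construct an explicit candidate for $\agentdefaut'$ and then resolve the dichotomy by asking whether that candidate improves strictly anywhere. First I would introduce the state value $V^{\agentdefaut}(s) := \mathcal{Q}^{\agentdefaut}(s,\agentdefaut(s))$ and record the one-step (Bellman) recursion $\mathcal{Q}^{\agentdefaut}(s,\action) = r(s,\action) + \gamma\sum_{s'}P(s'\,|\,s,\action)\,V^{\agentdefaut}(s')$, where $P$ is the transition kernel induced by $\sigma$ in the strictly Markov case. Because $\actionSet$ is finite, I can define the greedy agent $\agentdefaut'(s) \in \arg\max_{\action\in\actionSet}\mathcal{Q}^{\agentdefaut}(s,\action)$, with the maximum attained at each state; this is the only candidate I will need.

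With $\agentdefaut'$ in hand, the first conjunct $\Forall s\,(\mathcal{Q}^{\agentdefaut}(s,\agentdefaut'(s)) \geq \mathcal{Q}^{\agentdefaut}(s,\agentdefaut(s)))$ holds by construction, since $\agentdefaut(s)$ is one of the actions over which the maximum is taken. The dichotomy then splits on the second conjunct. If there is any state at which the greedy maximum strictly exceeds $\mathcal{Q}^{\agentdefaut}(s,\agentdefaut(s))$, then $\agentdefaut'$ satisfies both conjuncts and the first alternative of the theorem holds. Otherwise equality holds at every state, which says exactly that $\agentdefaut$ is greedy with respect to its own action-value function, i.e.\ $V^{\agentdefaut}(s) = \max_{\action}\mathcal{Q}^{\agentdefaut}(s,\action)$ for all $s$.

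It remains to show that this greedy fixed-point property forces $\agentdefaut$ to be optimal, and this is the step I expect to be the main obstacle, since it is the only place where the global comparison against \emph{all} agents enters rather than a single one-step comparison. The idea is to fix an arbitrary competitor $\agentdefaut''$ and prove $V^{\agentdefaut}(s) \geq V^{\agentdefaut''}(s)$ for every $s$. From greediness, $V^{\agentdefaut}(s) \geq \mathcal{Q}^{\agentdefaut}(s,\agentdefaut''(s)) = r(s,\agentdefaut''(s)) + \gamma\sum_{s'}P(s'\,|\,s,\agentdefaut''(s))\,V^{\agentdefaut}(s')$; that is, $V^{\agentdefaut} \geq T^{\agentdefaut''}V^{\agentdefaut}$ componentwise, where $T^{\agentdefaut''}$ is the policy-evaluation operator of $\agentdefaut''$. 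Here the hypotheses pay off: with bounded rewards and discount $\gamma<1$, the operator $T^{\agentdefaut''}$ is a $\gamma$-contraction in the sup-norm over the finite state space, hence has the unique fixed point $V^{\agentdefaut''}$, and it is monotone because the transition weights are nonnegative. Iterating $V^{\agentdefaut}\geq T^{\agentdefaut''}V^{\agentdefaut}$ under monotonicity gives $V^{\agentdefaut} \geq (T^{\agentdefaut''})^{n}V^{\agentdefaut} \to V^{\agentdefaut''}$, so $V^{\agentdefaut}\geq V^{\agentdefaut''}$.

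Since $\agentdefaut''$ was arbitrary and $J(\agentdefaut)=\expectation_{s_{0}\sim\sigma_{0}}[V^{\agentdefaut}(s_{0})]$ in a finite strictly Markov process, this yields $J(\agentdefaut)\geq J(\agentdefaut'')$ for every agent $\agentdefaut''$, so $\agentdefaut$ is optimal and the dichotomy is complete. I would note that each hypothesis is used exactly once: finiteness of $\actionSet$ to make the greedy $\agentdefaut'$ well defined, and the discounted, bounded-reward, finite-state conditions to make the $\gamma$-contraction (and hence the uniqueness and limiting argument) go through.
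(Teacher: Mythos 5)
The paper does not actually prove \autoref{PIT}; it states the theorem as a classical result and defers to the cited references (Howard 1960, Watkins 1989). Your argument is precisely the standard proof from that literature --- greedy improvement plus the monotone $\gamma$-contraction property of the policy-evaluation operator --- and it is correct: the case split on whether the greedy agent strictly improves anywhere exactly yields the stated dichotomy, since when it does not, no other $\agentdefaut'$ can either. The only caveat worth recording is that your optimality step compares $\agentdefaut$ only against stationary (strictly Markov) competitors $\agentdefaut''$; since the paper's general notion of agent is history-dependent, one should either note that the theorem's ``optimal'' ranges over strictly Markov agents, or append the standard observation that in a finite discounted MDP the supremum of $J$ over history-dependent agents is attained by a stationary one, so the conclusion is unaffected.
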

		This theorem also holds if this condition is appended:
		\begin{align}
			\Exists !& s \in S(\agentdefaut'(s) \neq \agentdefaut(s)).
		\end{align}
		With this condition, \autoref{PIT} shows that every agent $\agentdefaut$ either has a superior neighbor $\agentdefaut'$ (an agent which differs from $\agentdefaut$ in its response to exactly one state), or it is optimal.
		
		This means that any \emph{finite} strictly Markov decision process has a discrete ``convexity''\hspace{-0.1em}; every imperfect agent has a superior neighbor. Unfortunately, many of the problems of interest in modern Reinforcement Learning are not finite or, if finite, are too large for the \emph{Policy Improvement Algorithm}, which exploits this theorem, to be tractable \cite[54]{Watkins1989}. 
		
	\subsection{The Assumptions of Dynamic Programming}
		Modern Reinforcement Learning confronts many problems which do not satisfy the assumptions of the Policy Improvement Theorem. Many modern problems fail directly, using infinite decision processes. Many others \emph{technically} satisfy the requirements of the theorem, but are so large as to render the guarantees of policy improvement notional with modern computational techniques. Still others fail the qualitative conditions, for example, by not being strictly Markov.
	
		The calculations necessary to guarantee that one policy is an improvement upon another require that the learner have knowledge of each state and action in the process, as well as of the dynamics of the decision process. While excessive size of $S$ or $\actionSet$ are easiest to exhibit, the interplay of the size of these sets tends to be the true source of the problem. Difficulties caused by this interplay are known as the \emph{curse of dimensionality}, a term which Bellman used to describe the way that problems with many aspects can be more difficult than the sum of their parts \cite[ix]{Bellman1957a}. For example, the size of the set of agents grows exponentially in the number of states and polynomially in the number of actions:
		\begin{align}
		|\agentSet| = |\actionSet|^{|S|}.
		\end{align}
		In large problems, the curse of dimensionality makes the guarantees of Dynamic Programming almost impossible to achieve in practice. Curiously, the performance of DP methods seems to degrade slowly, even as the information necessary for the guarantees of DP quickly becomes unachievable.
		
	\subsection{Incomplete Information and Dynamic Programming}\label{incompleteInformation}
		One way to address the size of a Reinforcement Learning problem is to collect the information necessary for Dynamic Programming more efficiently. Dynamic Programming methods which approximate $\mathcal{Q}^{\agentdefaut}$ rely on several types of information, all based in the superiority condition:
		\begin{align}
			\Forall s \in S \big(\mathcal{Q}^{\agentdefaut}(s,\agentdefaut'(s))\geq \mathcal{Q}^{\agentdefaut}(s,\agentdefaut(s))\big).
		\end{align}
		 $\mathcal{Q}$-based methods rely on the learner's ability to approximate the $\mathcal{Q}^{\agentdefaut}$-function. There are two ways to approach this problem: $\mathcal{Q}^{\agentdefaut}$ can be approximated directly (i.e. separately for each state), or it can be calculated from knowledge or approximations of several aspects of the process:
		\begin{enumerate}
			\item The set of states, $S$,
			\item The set of actions, $\actionSet$,
			\item The immediate reward function, $r$,
			\item The state-transition function, $\sigma$, and
			\item The agent $\agentdefaut$.
		\end{enumerate}
		
		Calculating the $\mathcal{Q}^{\agentdefaut}$ function is, in a sense, more efficient: if the quantities above are known to an acceptable degree of precision, then a consistent action-value function can be imputed without repeated sampling.
		
		The complexity of this operation is a function of $|S| \times |\actionSet|$. In a sense, this method is inexpensive; the number of state-action pairs is much smaller than the number of possible sample paths, $|\Phi_{t}| = |S \times \actionSet|^{t} > |S \times \actionSet|$. It is also typically much smaller than the number of agents, $|\actionSet|^{S}$.
		
		This kind of approximation is fundamental to Dynamic Programming methods, and serves as the basis for many exploration methods. In general, reinforcement learning algorithms are assumed to have full knowledge of $\actionSet$ and $\agentdefaut$, but they may not have complete information about $S$, $r$, or $\sigma$. Thus, exploration has sometimes been defined as pursuing experiences of state-action pairs $(s, \action)$, specifically the quadruplets which can be associated with them in a path, $(s_{t}, \action, r(s_{t}, \action), s_{t+1})$. If enough of these are collected, it is possible to approximate the expected reward of each state-action pair, as well as the transition probabilities $\sigma(s,\action)$.
		
		Importantly, because these quadruplets can be induced by particular actions (i.e., $(s, \action)$ can be generated by  taking the action $\action$ at a time when $s$ is visited), the task of collecting the information about these quadruplets can be reduced to a simple two-step formula: first, visit each state $s \in S$, and second, attempt each action in that state. Under the right circumstances, proceeding in this way results in experience of every state-action pair, $\{(s,\action), | s \in S, \action \in \actionSet\}$.
		
		This simple notion of exploration is both efficient and effective in some circumstances: if $S$ is small and all of its states may be easily visited, and $\actionSet$ is small, then it is easy to consider every possible state-action pair. After enough sampling, this allows the $\mathcal{Q}$-function to be approximated. Outside of problems which satisfy those conditions, it is natural to consider other notions of exploration. Sometimes, these definitions take a more descriptive form, for example in Thrun: ``exploration seeks to minimize learning time'' \cite{Thrun1992}. Even without sufficient information about the process to \emph{guarantee} policy improvement, Dynamic Programming methods have performed admirably \cite{Vinyals2019, OpenAI2019}. 
		
		These results could be seen as a testament to the efficacy of Dynamic Programming under non-ideal circumstances, however, there is a curious countervailing trend: in many of these problems, simple or black-box methods such as Evolution Strategies \cite{Salimans2017} (an implementation of finite-differences gradient approximation) have been able to match the performance of modern Dynamic Programming methods. This conflicts with the present theory in two ways: first, it challenges the idea that Dynamic Programming is uniquely efficient, or uniquely suited to Reinforcement Learning problems. Second, because these methods are not dynamic, they lack the usual information requirements which exploration is supposed to resolve, yet they are useless without exploration (as seen in \autoref{unid}) - if information about state-action pairs is not directly employed, what is the role of exploration?
		
		In the next section we begin by describing the properties of the methods of \autoref{outputFunction} which guarantee that \emph{eventually}, every state-action pair will be experienced. We then discuss methods which address circumstances where certain states are difficult to handle
	
\section{Exploration and contentment}\label{exploration}
	Dynamic exploration can be divided into two categories: directed exploration and undirected exploration \cite{Thrun1992}. Undirected methods explore by using random output functions like $\varepsilon$-Greedy Sampling and Thompson Sampling (see \autoref{outputFunction}) to experience paths which would be impossible with the corresponding greedy agent. These are called \emph{un}directed because the changes which the output functions make to the greedy version of the agent are not intended to cause the agent to visit particular states. Instead, these methods explore through a sort of serendipity.
	
	Directed methods have specific goals and mechanisms more narrowly tailored to the optimization paradigm they support. Some directed methods, like \#Exploration \cite{Tang2017} (see \autoref{hashtagExploration}) seek to experience particular state-action pairs by directing the learner through \emph{exploration bonuses} to consider agents which lead to state-action pairs which have been visited less in the learning history. In order to apply the state-action pair formulation of exploration to large and even infinite problems, \#Exploration employs a hashing function to simplify and discretize the set set of states.
 
	Other directed exploration methods, like that of Stadie et al. \cite{Stadie2015} (see \autoref{stadieDefinition}) employ exploration bonuses to incentivize agents which visit states which are poorly \emph{understood}. Stadie et al. begin by modeling the state-transition function (in a strictly Markov process) $\sigma$ with a function $\mathcal{M}$. In each step of the process, the model $\mathcal{M}(s_{t}, \action_{t})$ guesses the next state $s_{t+1}$. After guessing, the model is trained on the transition from $(s_{t},\action_{t})$ to $s_{t+1}$. When the model is less accurate (i.e. when the \emph{distance}\footnote{Stadie et al. assume that $S$ is a metric space.} between $s_{t+1}$ and $\mathcal{M}(s_{t}, \action_{t})$ is large), Stadie et al. reason, there is more to be learned about $s_{t}$, and their method assigns exploration bonuses to encourage agents which visit $s_{t}$. Unfortunately, none of these methods can recover the guarantees of Dynamic Programming in problems which do not satisfy the requirements of the theory of Dynamic Programming. More detailed descriptions of these methods may be found in \autoref{explorationMethods}.
	
	\subsection{The Essence of Exploration}
		The brief survey above and in \autoref{explorationMethods} is far from complete, but it contains the core strains of most modern exploration methods. A more thorough discussion of exploration may be found in \cite{Thrun1992} or \cite{Weng2020}. In spite of its incompleteness, our survey suffices for us to reason generally about exploration, and about its greatest mystery: why do methods which were developed to collect exactly the information necessary for dynamic programming appear to help naïve methods succeed?
		
		Because naïve methods do not make use of action-values, nor do they collect information about particular state-action pairs, the dynamic motivations of these exploration methods cannot explain their efficacy when paired with naïve methods of exploitation. Instead, there must be something about the process of exploration itself which aids naïve methods. That poses a further challenge to the dynamic paradigm: if exploration is effective in naïve methods for non-dynamic reasons, to what extent do those reasons contribute to their effect in dynamic methods?
	
		These exploration methods seem to share little beyond their motivations. One other thing which they share - and which they by necessity share with \emph{every} reinforcement learning algorithm - is that their mechanism is, ultimately, aiding in the selection of the next set of agents $\agentSet_{n}$. Undirected methods accomplish this by selecting an output function, and directed methods go slightly further in influencing the parameters of the agents, but this is their shared fundamental mechanism.
		
		What differentiates exploration methods from other reinforcement learning methods is that they influence the selection of agents not to improve reward in the next epoch, as is standard in exploitation methods, but to collect a more diverse range of information about the process. It is the combination of this mechanism and purpose which makes a method explorative:
		\begin{definition}[Exploration]
			A reinforcement learning method is \emph{explorative} if it influences the agents $\agentdefaut \in \agentSet$ for the purpose of information collection.
		\end{definition}
		We now know two things about exploration: in \autoref{exploitation} we established that exploration was a process which sought additional information about the process. We have now added that exploration is accomplished \emph{by} changing the agents which the learner considers. This, however, does not resolve our question: under the dynamic programming paradigm, these changes are made so as to collect the information necessary to calculate action-values. What is the information which naïve methods require, and how does dynamic exploration collect it? To what extent does that other information contribute to the effectiveness of those methods in dynamic programming?
		
		To continue our study of exploration in naïve learning, we begin in the next section with a discussion of \emph{Novelty Search}, an algorithm which uses a practitioner-defined \emph{behavior function} $\behavior$ to explore \emph{behavior spaces}. In \autoref{pseudometrics}, we describe a general substrate for naïve exploration which is general enough to contain other exploration substrates, \emph{and} is equipped with a useful topological structure.
		
\section{Naïve Exploration}\label{naiveExploration}
	One of the most prominent examples of naïve exploration is an algorithm called \emph{Novelty Search} \cite{Lehman2011}. In contrast to the other methods which we discuss in this work, its creators do not describe it as a learning algorithm. Instead, they call novelty search an ``open-ended'' method. Nonetheless, methods which incorporate Novelty Search can usually be analyzed as learning algorithms, since they typically satisfy \autoref{defLearningAlgorithm}, with the possible exception of the ``purpose'' of the method. 
	
	\subsection{Novelty Search}\label{novelty}
		Novelty Search is an algorithmic component of many learning algorithms which was introduced by Joel Lehman and Kenneth O. Stanley \cite{Lehman2011, Lehman2008}. Unlike other learning methods, Novelty Search works to encourage \emph{novel}, rather than high-reward agents.
	
		\begin{definition}[Novelty Search]
			Novelty Search is a component which can be incorporated into many learning algorithms which defines the \emph{behavior} $\behavior(\agentdefaut)$ and \emph{novelty} $N(\agentdefaut)$ of agents which the learning algorithm considers $\agentdefaut \in \agentSet_{n}$. 
			
			Because Novelty Search does not specify an optimizer, the details of implementation can vary, but the ``search'' in Novelty Search refers to the way that Novelty Search methods seek agents with higher novelty scores $N(\agentdefaut)$. These scores are based upon the scarcity of an agent's behavior $\behavior(\agentdefaut)$ within a behavior archive $\chi$ \eqref{behaviorArchive}.
		\end{definition}
	
		\begin{definition}[Behavior]
			The \emph{behavior} of an agent is the image of that agent\footnote{In practice, many behavior functions are functions of a sampled path of the agent, rather than the agent itself.} under a \emph{behavior function} (or behavior \emph{characterization})
			\begin{align}
				\behavior:\agentSet\rightarrow X \label{behavior};
			\end{align}
			a function from the set of agents $\agentSet$ into a space of behaviors $X$ equipped with a distance $d_{X}$.\footnote{\label{notAMetric}
				The literature on Novelty Search is most sensible when the range of the behavior function is assumed to be a metric space, and novelty search is usually discussed under that pretense. However the ``novelty metrics'' employed in \emph{Abandoning Objectives} are \textbf{not} metrics in the mathematical sense (see \autoref{metricAxioms}) Instead, they are \emph{squared} Euclidean distances - a symmetric \cite{Arkhangelskii1990}. We use the word \emph{distance} to refer to a broader class of functions, and use the word \emph{metric} and its derivatives in the formal sense.
			}
		
			The behavior \emph{archive} $\chi$ in Novelty Search is a subset of the behaviors which have been observed in the learning process,
			\begin{align}
				\chi \subset \bigcup_{m \leq n} \{\behavior(\agentdefaut) \,|\, \agentdefaut \in \agentSet_{m}\}.\label{behaviorArchive}
			\end{align}
			In general, the archive is meant to summarize the behaviors which have been observed so far using as few representative behaviors as possible, so as to minimize computational requirements.
		\end{definition}
		
		\begin{definition}
			The novelty $N$ of a behavior $\behavior(\agentdefaut)$ is a measure of the sparsity of the behavior archive around $\behavior(\agentdefaut)$. In \emph{Abandoning Objectives}, Lehman and Stanley use the average distance\footref{notAMetric} from that behavior to its $k$ nearest neighbors in the archive, $K \subset \chi$:
			\begin{align}
				N(\behavior(\agentun)) = \frac{1}{k}\sum_{\behavior(\agenttrois) \in K} d_{X}(\behavior(\agentun), \behavior(\agenttrois)).
			\end{align}
		\end{definition}
		
		Novelty Search reveals something about naïve methods as a class: because they do not operate under the dynamic paradigm, individually manipulating the ways that agents respond to each situation possible in the process, they must employ another structure to understand the agents which it considers, and to determine $\agentSet_{n+1}$. For this purpose, Novelty Search uses the structure of the chosen behavior space X. Let us consider the structures other naïve methods might use.
		
		In the case of exploitation, a simple structure is available: reward. The purpose of exploitation is to improve reward, so reward is the relevant structure. Under the dynamic framework, reward is decomposed into the immediate rewards $r(s, \action)$, and agents are specified in relation to these. In the naïve framework, that decomposition is not availed, so only the coarser reward of an agent, $R(\agentdefaut)$, can be used. In some problems, other structures may correlate with reward, but these correlations can be inverted by changes to the state-transition function or reward function, so the only a priori justifiable structure for exploitation is reward.
		
		Exploration is more complex. In Dynamic Programming, the information necessary to solve a Reinforcement Learning problem is well defined, but in the naïve framework, there is not a general notion of information ``sufficient'' to solve a problem (except for exhaustion of the set of agents). That is, naïve exploration does not have a natural definition in the same way as naïve exploitation or dynamic exploration. Let us then consider a definition of exclusion:
		
		\begin{definition}[Naïve Exploration]\label{defNaiveExploration}
			A learning method is a method of naïve exploration if
			\begin{enumerate}[i.]
				\item The method itself is naïve, and
				\item The set of agents is explored using a structure that is not induced by the expected reward.
			\end{enumerate}
		\end{definition}
		
		Under this definition, Novelty Search is clearly a method of naïve exploration. Let us consider it further. Rather than treat it as a unique algorithm, we can consider Novelty Search to be a family of exploration methods, each characterized by the way that it projects the set of agents $\agentSet$ into a space of behaviors $X$.
		
		Novelty Search can be analyzed with respect to several goals: it may be viewed as an explorative method, or, when paired with a method of optimization, it may be seen as an open-ended or learning method in and of itself. Unfortunately, the capacity of Novelty Search to accomplish any of these goals is compromised by the subjectivity of the behavior function. Because the behavior function relies on human input, the exploration which is undertaken, the diversity which Novelty Search achieves, and the reward at the end of a learning process involving Novelty Search all rely on the beliefs of the practitioner. Instead of viewing this subjectivity as a problem, Lehman and Stanley embrace it, suggesting that behavior functions \emph{must} be determined manually for each problem, writing:
		\begin{quote}
			There are many potential ways to measure novelty by analyzing and quantifying behaviors to characterize their differences. Importantly, like the fitness function, this measure must be fitted to the domain.\hfill -Lehman and Stanley \cite{Lehman2011a}
		\end{quote}
		If followed, this advice would make it virtually impossible to disentangle Novelty Search as an algorithm from either the problems to which it is applied or the practitioners applying it. Fortunately, some authors have rejected this suggestion, pursuing more general notions of behavior.
		
		We now present a brief overview of some of the behavior functions in the literature, including those described in Lehman and Stanley's pioneering Novelty Search papers \cite{Lehman2011, Lehman2008}. \appref{behaviorFunctions} presents a more detailed discussion of these functions as well as some other behavior functions which could not be included in this summary.
		
		In their flagship paper on Novelty Search, Lehman and Stanley \cite{Lehman2011} consider as their primary example problem a two-dimension maze. As a secondary example they take a similar navigation problem in three dimensions. Importantly for our discussion, behavior functions in both environments admit a concept of the agent's \emph{position}. Lehman and Stanley introduce two behavior functions in their study of Novelty Search, both of which are functions of the position of the agent throughout a sampled truncated path $\phi_{t}^{\agentdefaut} \sim \Phi^{\agentdefaut}$.
		
		The simpler of these behavior function takes the agent's final position (i.e. the position of the agent when the path is truncated), and the more complex behavior function takes as behavior a list of positions throughout the path taken at temporal intervals. These functions provide insight into Lehman and Stanley's intuitions about behavior: to them, behavior relates to the state of the process, rather than to the agent or its actions. Because the state of the process depends upon the interaction of the process and the agent, these definitions assure that behavior reflects the interaction between the process and the agent.
		
		This is important. Notice that under the function-approximation framework (\autoref{outputFunction}), any function with appropriate range and domain could be treated as an agent. As a result, if behavior were taken to be a matter of the function approximator alone, one would be forced to accept the premise that the structure of the set of agents should be identical for any pair of processes with the same sets of states and actions. Identical even when the state-transition functions differ. In other words, \emph{all} three-dimensional navigation tasks would have the same space of agents. This issue certainly suffices to explain Lehman and Stanley's attitude toward behavior functions. It does not, however, \emph{necessitate} that approach.
		
		Other authors have considered near-totally general notions of behavior. One group of these focuses on collating as much information as possible about \emph{every} point in a path. In the case of Gomez et al. \cite{Gomez2009}, this involves concatenating some number of observed states. Conti et al. \cite{Conti2017} take a similar approach, replacing the states of the decision process with RAM states - the version of the state of the decision process stored by the computer.
		
		Another class of general notions of behavior focuses instead on the actions of the agent itself, as viewed across a subset of $S$. We call this class of functions \emph{Primitive Behavior}.		
			
		\subsection{Primitive Behavior}\label{primitiveBehavior}
			Primitive behavior functions define the behavior of a strictly Markov agent $\agentdefaut$ as the restriction of the agent to a subset of $S$:
			\begin{definition}[Primitive Behavior]
				A behavior function $\behavior : \agentSet \rightarrow M$ is said to be \emph{primitive} if it is a collection of an agent's actions in response to a finite set of states $X \subset S$; $\behavior$ is primitive iff 
				\begin{align}
					\behavior(\agentdefaut) = \{(s, \agentdefaut(s))\,|\,s \in X\}, \text{ and}
				\end{align}
				the distance on the set of behaviors is given by a weighted (by $w_{s}$) sum of distances between the actions of the agents on $X$:\footnote{This assumes that the set of actions $\actionSet$ is a metric space with $d_{\actionSet}$.}
				\begin{align}\label{primitiveBehaviorEquation}
					d(\behavior(\agentun), \behavior(\agentdeux))=\sum_{s \in X}w_{s}d_{\actionSet}(\agentun(s), \agentdeux(s)).
				\end{align}	
			\end{definition}
		
			This notion of behavior, with slight modifications, has appeared in several papers in the Reinforcement Learning literature \cite{Meyerson2016, Parker-Holder2020, Stork2020, Pacchiano2020}. At least one existing work uses this notion of behavior in Novelty Search \cite{Meyerson2016}. Another \cite{Parker-Holder2020} uses it for optimization with an algorithm other than Novelty Search. \cite{Meyerson2016, Stork2020, Pacchiano2020} weight the constituent distances (i.e. $w_{s}$ is not constant), and \cite{Stork2020} uses primitive behavior to study the relationship between behavior and reward.
			
			A number of important properties of primitive behavior have been described. For example, \cite{Parker-Holder2020} notes that agents with the same behavior may have different parameters. \cite{Meyerson2016} takes implicit advantage of the fact that agents which do not encounter a state do not meaningfully have a response to it (a fact we address in \autoref{identifyingDifferences} of \autoref{pseudometrics}), and \cite{Stork2020} considers the states which an agent encounters an important aspect of the agent, using them to create equivalence classes of agents.
			
			We call this notion of behavior \emph{primitive} because it is the simplest notion of behavior which completely describes the interaction between the agent and the process [on $X$]. Thus, for an appropriate set $X$, the primitive behavior contains all information relevant to $(\mathcal{P}, \agentdefaut)$. So long as a definition of behavior only depends on the interaction between $\mathcal{P}$ and $\agentdefaut$, primitive behavior thus suffices to determine \emph{every} other notion of behavior. 
			
			Clearly, this does not follow the advice of Lehman and Stanley \cite{Lehman2011}; it is completely unfitted to the underlying problem. In exchange for this lack of fit, primitive behavior is \emph{fully} general. Further, because primitive behavior is simply a restriction of the agent itself, \emph{every} other notion of behavior is downstream of primitive behavior, provided that an appropriate set $X$ is used.
			
			However, because the selection of $X$, and of the weights $w_{s}$ is itself a matter of choice, primitive behavior in general remains somewhat subjective. In finite problems, it is possible to assign to every state a non-zero weight, which produces a sort of objective distance, but this remains problematic; two agents might differ on a state which neither of them ever visits. Should agents $\agentun$ and $\agentdeux$ which produce identical processes $(\mathcal{P}, \agentun)$ and $(\mathcal{P}, \agentdeux)$ really be described as different? We contend in \autoref{defAgentIdentity} that the answer is ``no''. 
			
			Our task in the next several sections is to resolve this and other issues with primitive behavior. In the next section we approach the matter of a general substrate for exploration from the ground up. We begin with the simplest version of primitive behavior (that associated with a single state) and proceed to a ``complete'' notion of behavior: a distance between agents which properly discriminates between agents (see \autoref{defAgentIdentity}). In \autoref{theAgentSpace}, we demonstrate some properties of this completed space, which we call the \emph{agent space}.
			
\section{Seeking a Structure for Naïve Exploration}\label{pseudometrics}
	Under \autoref{defNaiveExploration}, naïve exploration is a category of exclusion. Any naïve method which is not exploitative, i.e. which does not use the structure induced by an agent's expected reward, is a method of naïve exploration. Our task in this section is to develop a \emph{good} structure for exploration. That is, to develop a structure on the space of agents, other than reward, which captures important aspects of the relationships of agents to one another and to the process. We seek a structure which:
	\begin{enumerate}[i.]
		\item Exists in every discrete-time decision process,
		\item Correctly describes equivalence relations between agents (see Definitions \ref{metricAxioms} and \ref{defAgentIdentity}), i.e.
			\begin{enumerate}
				\item Identifies agents which differ in aspects irrelevant to their processes,
				\item Distinguishes agents which differ in aspects which \emph{are} relevant to their processes, and \label{identifyingDifferences}
			\end{enumerate}
		\item Naturally describes important relations on the set of agents.
	\end{enumerate}
	Such a structure would allow us to compare structures which are used in naïve exploration methods, including, for example, the various behavior functions which have been used in Novelty Search. If computationally tractable, such a structure could also provide the basis for a new exploration method, or perhaps even a new \emph{definition} of exploration. Let us begin by considering one possible kind of structure for this.

	\subsection{Prototyping the Agent Space}\label{anAgentSpace}
		In contending with a generic discrete-time decision process, few assets are available to define the structure of an agent space. At a basic level, there are only two types of interaction between an agent and a process: the generation of a state, in which the decision process acts on the agent (${\mathcal{P} \rightarrow \agentdefaut}$), and the action of an agent, in which the agent acts on the decision process (${\agentdefaut \rightarrow \mathcal{P}}$). Every other aspect of a decision process may be regarded as a function of those interactions. Let us begin by using these basic interactions to define a metric space, following the behavior functions used in Novelty Search.
		\begin{definition}[Metric]\label{metricAxioms}
			A \emph{metric} on a set $X$ is a function
			\begin{align}
			d:X \times X \rightarrow \mathbb{R}^{+} \cup \{0\}
			\end{align}
			which satisfies the metric axioms:
			\begin{enumerate}
				\item $d(x, y) = 0 \iff x = y$ \hfill Identity of Indiscernibles 
				\item $d(x,y) = d(y,x)$ \hfill Symmetry
				\item $d(x,y) \leq d(x,z) + d(z,y)$ \hfill Triangle Inequality
			\end{enumerate}
		\end{definition}\noindent
		Let us begin with a simple case, comparing strictly Markov agents on their most basic elements: their actions on the process in response to a single state $s$.

	\subsection{The Distance on $s$}
		Let $\actionSet$ be a metric space with metric $d_{\actionSet}$. Then, we define the distance between agents $\agentun$ and $\agentdeux$ in a single-state decision on the state $s$:
		\begin{definition}[Distance on $s$]
			The \emph{distance on $s$} between $\agentun$ and $\agentdeux$ is the distance of their actions on $s$:
			\begin{align}
				d_{s}(\agentun, \agentdeux) = d_{\actionSet}(\agentun(s), \agentdeux(s)).
			\end{align}
		Importantly, this distance is \emph{not} a metric.\footref{notAMetric} Instead, it is a \emph{pseudo}metric; it cannot distinguish agents which act identically on $s$ but differently on another state $s'$.
		\end{definition}
		\begin{definition}[Pseudometric]
			A \emph{pseudometric} on a set $X$ is a function
			\begin{align}
			d:X \times X \rightarrow \mathbb{R}^{+} \cup \{0\}
			\end{align}
			which satisfies the pseudometric axioms
			\begin{enumerate}
				\item $d(x,y) = 0 \impliedby x = y $ \hfill \textbf{Indiscernibility of Identicals}
				\item $d(x,y) = d(y,x)$ \hfill Symmetry
				\item $d(x,y) \leq d(x,z) + d(z,y)$ \hfill Triangle Inequality
			\end{enumerate}
		\end{definition}\noindent
		This distance describes the differences between $\agentun$ and $\agentdeux$ on the state $s$, but decision processes involve many states, potentially infinitely many. Certainly, the action which agents take in response to a single state does not suffice to explain the differences between agents. Let us begin to resolve this by comparing agents on a finite set of states $X$ ($|X| > 1$).
	
	\subsection{The Distance on $X$}
		Having defined the distance between agents on a single state $s$, we can define the distance on a set of states $X$ by summation. Denote the distance between agents on a finite set of states $X$ as $d_{X}(\agentun,\agentdeux)$.
		\begin{definition}[Distance on $X$]
		The \emph{distance on $X \subset S$ between $\agentun$ and $\agentdeux$} is the sum of the distances between $\agentun$ and $\agentdeux$ on each element of the set:
		\begin{align}
			d_{X}(\agentun,\agentdeux) &= \sum_{s \in X}d_{s}(\agentun,\agentdeux)\\
			&= \sum_{s\in X}d_{\actionSet}(\agentun(s), \agentdeux(s)).
		\end{align}
		\end{definition}
	
		Depending upon the process and the set $X$ itself, this could contain \emph{all} of the states in a process (for decision processes with finite sets of states), or a set of \emph{important} states. We can also extend this notion of distance by weighting the distance at each state $s \in X$ with a weight $w_{s}$, producing the primitive behavior of \autoref{primitiveBehavior},
		\begin{align}\tag{\ref{primitiveBehaviorEquation}}
			d(\behavior(\agentun), \behavior(\agentdeux))=\sum_{s \in X}w_{s}d_{\actionSet}(\agentun(s), \agentdeux(s)).
		\end{align}
		Consider a special case for $X$: let $X$ be the set of states observed before time $T$ in a path $\phi$:
		\begin{definition}[The distance on $\phi_{t}$]
			\begin{align}
				d_{\phi_{t}}(\agentun,\agentdeux) = \sum_{t\in\mathbb{Z}_{t}}d_{\phi_{i}(s)}(\agentun,\agentdeux),
			\end{align}
			Which is the distance between $\agentun$ and $\agentdeux$ over a truncated path. When $\phi$ is drawn from $\Phi^{\agentun}$, $d_{\phi_{t}}$ is especially interesting; it is a description of the way that $\agentdeux$ would have acted differently from $\agentun$ over a path that $\agentun$ actually encountered, a sort of backseat-driver metric.
		\end{definition}
		The distance on $\phi_{t}$ is powerful in a number of respects. Let us consider the case $d_{\phi_{t}}(\agentun,\agentdeux) = 0$. Clearly, this implies that $\agentun$ and $\agentdeux$ do not differ at all on this path - presented with the same initial state, they would produce exactly the same truncated path. Unfortunately, the distance at $\phi_{t}$ still fails to satisfactorily distinguish agents - it says nothing about paths in which other states are encountered, or longer paths, or about the stochastic nature of decision processes. Let us state these failings directly so that we may address them:
		
		\begin{remark}[Three Properties]\label{threeProperties}\mbox{} 
			\begin{enumerate}[I.]
				\item The distance on a truncated path $\phi_{t}$ drawn from $\Phi^{\agentun}$ is not reciprocal; it describes how $\agentdeux$ differs from $\agentun$, but not how $\agentun$ differs from $\agentdeux$,\label{item1}
				\item This distance ignores the stochasticity of the process; the ways in which $\agentun$ and $\agentdeux$ differ on $\phi_{t}$ do not necessarily imply anything about the other paths which $\agentun$ experiences,\label{item2}
				\item The distance on a truncated path cannot account for infinite paths; if the agents are not assumed to be strictly Markov, one can easily construct a pair of agents $\agentun$ and $\agentdeux$ which differ only on longer paths. Even in the strictly Markov case, some states might only be possible after $t$.\label{item3}
			\end{enumerate}
		\end{remark}
	
	\subsection{The Role of Time}\label{roleTime}
		It is common in Reinforcement Learning to treat problems with infinite time horizons by weighting sums with a discount function $\omega:\mathbb{N}\rightarrow[0,1]$. If the space of actions $\actionSet$ is bounded,\footnote{A metric space $\actionSet$ is bounded iff $\sup\{d_{\actionSet}(x, y)\,|\, x, y \in \actionSet\} < \infty$} and
		\begin{align}
			\sum_{t\in\mathbb{N}}\omega(t)< \infty \label{finiteDiscountFunction}
		\end{align}
		Then we may define the distance between strictly Markov agents $a$ and $b$ on a complete path:
		\begin{align}
			d_{\phi}(\agentun,\agentdeux) = \sum_{t\in\mathbb{N}}\omega(t)d_{\phi_{t}(s)}(\agentun,\agentdeux).\label{distanceOnPhi}
		\end{align} 
		Clearly, if $d_{\actionSet}$ is bounded, then \cite[60]{Rudin1976}
		\begin{align}
			d_{\phi}(\agentun,\agentdeux) \leq \sup_{\action_{1}, \action_{2} \in \actionSet}\{d_{\actionSet}(\action_{1}, \action_{2})\} \cdot \sum_{t\in\mathbb{N}}\omega(t) < \infty.
		\end{align}
		Thus, this pseudometric resolves the problem of \autoref{item3} of \autoref{threeProperties}, describing the differences in the action of agents over an infinite path.
		
		Our definition of $d_{\phi}(\agentun,\agentdeux)$ readily admits a change that allows the distance on a path to be defined for agents which are not strictly Markov. All we must do is remove the symbols ``$(s)$'':
		\begin{align}
			d_{\phi}(\agentun,\agentdeux) = \sum_{t\in\mathbb{N}}\omega(t)d_{\phi_{t}}(\agentun,\agentdeux).
		\end{align}
		Let us use the distance on $\phi$ to define a notion of distance which incorporates the stochastic aspects of the interaction between an agent and a process, resolving the problem of \autoref{item2} of \autoref{threeProperties}.
		
	\subsection{The Distance at $a$}
		Consider $\Phi^{\agentun}$, the distribution of paths generated by the interaction of an agent $\agentun$ with $\mathcal{P}$. Given a discount function $\omega$ satisfying \eqref{finiteDiscountFunction}, \cite[318]{Rudin1976}
		\begin{align}	
			\expectation_{\phi \sim \Phi^{\agentun}}\left[d_{\phi}(\agentdeux,\agenttrois)\right] = &\expectation_{\phi \sim \Phi^{\agentun}}\left[\sum_{t\in\mathbb{N}}\omega(t)d_{\phi_{t}}(\agentdeux,\agenttrois)\right]\\ 
			=&\expectation_{\phi \sim \Phi^{\agentun}} \left[\lim_{n \rightarrow \infty} \sum_{t < n}\omega(t)d_{\phi_{t}}(\agentdeux,\agenttrois)\right]\\
			= \lim_{n \rightarrow \infty} &\expectation_{\phi \sim \Phi^{\agentun}}\left[\sum_{t < n}\omega(t)d_{\phi_{t}}(\agentdeux,\agenttrois)\right]\\
			= \lim_{n \rightarrow \infty}  \sum_{t < n} &\expectation_{\phi \sim \Phi^{\agentun}}\,\left[\omega(t)d_{\phi_{t}}(\agentdeux,\agenttrois)\right]
		\end{align}
		exists and is bounded above. We call this quantity the \emph{distance at} $\agentun$.
		 
		Because the expectation integrates the distance on $\phi$ over all of the paths of $\agentun$, $d_{\agentun}$ compares $\agentdeux$ and $\agenttrois$ on every part of the process which $\agentun$ experiences. This guarantees that the stochasticity involved in the interaction of $\agentun$ and $\mathcal{P}$ is considered in the comparison. However,  the stochasticity involved in the processes $(\mathcal{P}, \agentdeux)$ and $(\mathcal{P}, \agenttrois)$ may be more relevant to their comparison than that produced by $\agentun$. In the next section, we begin to address this by considering the case $\agentun = \agentdeux$.

	\subsection{$d_{\agentun}(\agentun,\agentdeux)$}\label{daab0}
		In order to state our next result, we must introduce \emph{agent identity}. Agent identity collapses two artificial distinctions between representations of agents caused by the use of function approximators. First, it treats approximators which are differently parameterized but identical as functions (e.g. because of a permutation of the order of parameters, as noted by \cite{Parker-Holder2020}) as identical. Second, in keeping with the methods of \cite{Stork2020} it treats functions which differ, \emph{but only on a set of probability 0} as identical.
		\begin{definition}[Agent Identity ($\agentun \equiv \agentdeux$)]\label{defAgentIdentity}
			Let us say that the agents $\agentun$ and $\agentdeux$ are \emph{identical as agents} ($\agentun \equiv \agentdeux$) if and only if the set of paths where they differ has probability 0 in $\Phi^{\agentun}$;
			\begin{align}
			\mathbb{P}_{\Phi^{\agentun}}\left[\left\{\phi \,| \Exists t(\agentun(\phi_{t}) \neq \agentdeux(\phi_{t}))\right\}\right] &= 0.
			\end{align}
			That is, $\agentun$ and $\agentdeux$ are identical as agents if and only if the probability of $\agentun$ encountering a path where they differ is 0.
		\end{definition}
		\begin{remark}[Agent Identity and $d_{\agentun}(\agentun, \agentdeux)$]
			Notice that this implies $d_{\agentun}(\agentun,\agentdeux) = 0$.
		\end{remark}

		\begin{theorem}[Identical agents the same local distance]\label{identicalDistances}
			If $\agentun$ and $\agentdeux$ are identical as agents, then 
			\begin{align}
				d_{\agentun} = d_{\agentdeux}.
			\end{align}
		\end{theorem}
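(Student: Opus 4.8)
The plan is to reduce the claim to a single statement about path laws: \emph{if $\agentun \equiv \agentdeux$ then $\Phi^{\agentun} = \Phi^{\agentdeux}$ as distributions on the space of paths.} Granting this, the theorem follows at once. For arbitrary agents $x, y \in \agentSet$, the quantity $d_{\phi}(x,y) = \sum_{t\in\mathbb{N}}\omega(t)\,d_{\phi_{t}}(x,y)$ is a fixed measurable function of the path $\phi$ that does not depend on which agent \emph{generated} $\phi$, and by the bound established in \autoref{daab0} it is $\Phi^{\agentun}$-integrable. Hence
\[
	d_{\agentun}(x,y) = \expectation_{\phi\sim\Phi^{\agentun}}\!\left[d_{\phi}(x,y)\right] = \expectation_{\phi\sim\Phi^{\agentdeux}}\!\left[d_{\phi}(x,y)\right] = d_{\agentdeux}(x,y),
\]
where the middle equality is just integration of the same function against two identical measures. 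Since $x$ and $y$ were arbitrary, $d_{\agentun} = d_{\agentdeux}$.

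It remains to show $\Phi^{\agentun}=\Phi^{\agentdeux}$, and the plan here is induction on the finite-dimensional distributions of truncated paths, exploiting that the agent enters path generation in exactly one place. Writing $\mu_{t}^{a}$ for the law of the truncated path $\phi_{t}$ under $(\mathcal{P},a)$, I would show $\mu_{t}^{\agentun}=\mu_{t}^{\agentdeux}$ for every $t$; uniqueness of a measure determined by its finite-dimensional distributions then gives $\Phi^{\agentun}=\Phi^{\agentdeux}$. The base case is immediate: $\phi_{0}'=s_{0}\sim\sigma_{0}$ for both agents, and since the event $\{\agentun(s_{0})\neq\agentdeux(s_{0})\}$ is contained in the disagreement set $D$ of \autoref{defAgentIdentity}, it has $\Phi^{\agentun}$-probability $0$; as $\sigma_{0}$ is the $\phi_{0}'$-marginal of $\Phi^{\agentun}$, the two agents agree $\sigma_{0}$-almost surely, so $(s_{0},\agentun(s_{0}))$ and $(s_{0},\agentdeux(s_{0}))$ share a law.

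For the inductive step, suppose $\mu_{t}^{\agentun}=\mu_{t}^{\agentdeux}=:\mu_{t}$. The passage $\phi_{t}\mapsto\phi_{t+1}'$ draws $s_{t+1}\sim\sigma(\phi_{t})$ using only the agent-independent $\sigma$, so the law of $\phi_{t+1}'$ is common to both agents; call it $\mu_{t+1}'$, and note it is the $\phi_{t+1}'$-marginal of $\Phi^{\agentun}$. The only agent-dependent step is $\phi_{t+1}'\mapsto\phi_{t+1}$, which appends $a(\phi_{t+1}')$. The set of prime paths on which $\agentun$ and $\agentdeux$ disagree at time $t+1$ embeds into $D$, so its $\mu_{t+1}'$-measure is at most $\mathbb{P}_{\Phi^{\agentun}}[D]=0$; hence $\agentun=\agentdeux$ holds $\mu_{t+1}'$-almost surely and $\mu_{t+1}^{\agentun}=\mu_{t+1}^{\agentdeux}$, closing the induction. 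A coupling phrasing gives the same conclusion: generate $\phi$ under $\agentun$; with probability one it never meets a prime path in $D$, so the identical draws would have been produced under $\agentdeux$.

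The main obstacle I anticipate is the measure-theoretic bookkeeping rather than any single hard estimate: one must verify that the \emph{single} null-set hypothesis of \autoref{defAgentIdentity}, stated along $\Phi^{\agentun}$, in fact controls the disagreement at \emph{every} fixed time step (handled above by noting each time-$t$ disagreement event is a subset of $D$), and that agreement of all finite-dimensional laws upgrades to equality of the full path measures. A secondary point worth stating explicitly is that the hypothesis is superficially asymmetric, privileging $\Phi^{\agentun}$, yet the symmetry of the conclusion is automatic: once $\Phi^{\agentun}=\Phi^{\agentdeux}$ is established, $D$ is simultaneously null under both laws. Integrability and the interchange of expectation with summation are already secured by the bounded-discount estimate in \autoref{daab0}, so no further analytic work is needed there.
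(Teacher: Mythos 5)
Your proposal is correct and follows essentially the same route as the paper's own proof: an induction on $t$ showing that the laws of the truncated (prime) paths under $\agentun$ and $\agentdeux$ coincide, using that the only agent-dependent step in path generation happens on a disagreement set that is null under $\Phi^{\agentun}$, and then concluding $\Phi^{\agentun}=\Phi^{\agentdeux}$ and hence $d_{\agentun}=d_{\agentdeux}$ by integrating the same function $d_{\phi}(x,y)$ against identical measures. The only cosmetic difference is that the paper phrases the induction through $d_{\agentun}(\agentun,\agentdeux)=0$ and vanishing expectations $\expectation[d_{\phi_{t}}(\agentun,\agentdeux)]=0$ rather than directly through the null disagreement set of \autoref{defAgentIdentity}.
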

	
		\begin{proof}[Proof by induction]
			[Base case:] Suppose $d_{\agentun}(\agentun,\agentdeux) = 0$. Then,
			\begin{alignat}{4}
				d_{\agentun}(\agentun, \agentdeux)\,& & = \lim_{n \rightarrow \infty}  \sum_{t < n} &\expectation_{\phi \sim \Phi^{\agentun}}\left[\omega(t)d_{\phi_{t}}(\agentdeux,\agenttrois)\right] = 0\\
				\implies& & \Forall t \in \mathbb{N}\, \biggl(&\expectation_{\phi \sim \Phi^{\agentun}}\left[d_{\phi_{t}}(\agentun,\agentdeux)\right] = 0\biggr)\label{assumeForAllt}\\
				\implies& & &\expectation_{\phi \sim \Phi^{\agentun}}\left[d_{\phi_{0}}(\agentun,\agentdeux)\right] = d_{\sigma_{0}}(\agentun,\agentdeux) = 0.\label{noDifferenceOnPhi0}
			\end{alignat}
			That is, if $d_{\agentun}(\agentun,\agentdeux) = 0$ ($\agentun$ and $\agentdeux$ are identical), then $d_{\sigma_{0}}(\agentun,\agentdeux) = 0$ (they act identically on truncated prime paths of length 0). Then, the joint distributions $(\Phi_{0}^{\prime \agentun}, \agentun(\phi_{0}')) = (\sigma_{0}, \agentun(\phi_{0}'))$ and $(\Phi_{0}^{\prime \agentdeux}, \agentdeux(\phi_{0}')) = (\sigma_{0}, \agentdeux(\phi_{0}'))$ are identical.
			
			Thus, the joint distributions of these and the next state, given by $\sigma$, are also identical: the \emph{total variation distance} of $((\Phi_{0}^{\prime \agentun}, \agentun(\phi_{0}')), \sigma(\Phi_{0}^{\prime \agentun}, \agentun(\phi_{0}'))) = \Phi^{\prime \agentun}_{1}$ and $((\Phi_{0}^{\prime \agentdeux}, \agentdeux(\phi_{0}')), \sigma(\Phi_{0}^{\prime \agentdeux}, \agentdeux(\phi_{0}'))) = \Phi^{\prime \agentdeux}_{1}$ is 0;
			\begin{alignat}{4}
				& &&\,d_{\Phi^{\prime \agentun}_{0}}(\agentun,\agentdeux) &= 0\\
				&\hspace{1ex}\text{and} &&\TVD(\Phi^{\prime \agentun}_{0}, \Phi^{\prime \agentdeux}_{0})\\
				&\hspace{1.4ex} = \hspace{1.8ex} &&\TVD(\sigma_{0}, \sigma_{0}) &= 0\\
				&\implies &&\TVD(\Phi^{\agentun}_{0},\Phi^{\agentdeux}_{0}) &= 0\\
				&\implies &&\TVD(\Phi^{\prime \agentun}_{1}, \Phi^{\prime \agentdeux}_{1})\, &= 0.
			\end{alignat}
			The distribution $\Phi^{\agentun\prime}_{1}$ determines the component of $d_{\agentun}$ at $t = 1$. By \eqref{assumeForAllt}, we have $d_{\Phi_{1}^{\prime \agentun}}(\agentun,\agentdeux) = 0$.
			
			Let $t \in \mathbb{N}$ and suppose that $\TVD(\Phi^{\agentun\prime}_{t}, \Phi^{\agentdeux\prime}_{t}) = 0$. By \eqref{assumeForAllt}, we have $d_{\Phi_{t}^{\prime \agentun}}(\agentun,\agentdeux) = 0$. Then, since $\Phi^{\agentun}_{t}$ and $\Phi^{\agentdeux}_{t}$ are the joint distributions of these, they also have total variation 0, and since $\sigma$ is fixed, the resulting distributions $\Phi^{\prime \agentun}_{t+1}$ and  $\Phi^{\prime \agentdeux}_{t+1}$ also have total variation 0;
			\begin{alignat}{3}
			&d_{\Phi^{\prime \agentun}_{t}}(\agentun,\agentdeux) &&= 0\\
			\text{and}\hspace{0.8ex} &\TVD(\Phi^{\prime \agentun}_{t}, \Phi^{\prime \agentdeux}_{t}) &&= 0\\
			\implies &\TVD(\Phi^{\agentun}_{t},\Phi^{\agentdeux}_{t}) &&= 0\\
			\implies &\TVD(\Phi^{\prime \agentun}_{t+1}, \Phi^{\prime \agentdeux}_{t+1}) &&= 0.
			\end{alignat}
			By assumption \eqref{assumeForAllt}, we have 
			\begin{align}
				d_{\Phi_{t+1}^{\prime \agentun}}(\agentun,\agentdeux) = 0,
			\end{align}
			and thus, the joint distributions also have total variation 0:
			\begin{align}
				\TVD(\Phi^{\agentun}_{t+1}, \Phi^{\agentdeux}_{t+1}).
			\end{align}
			The same holds for all $t \in \mathbb{N}$, so we have
			\begin{align}
				d_{\agentun}(\agentun,\agentdeux) = 0 \iff& \TVD(\Phi^{\agentun}, \Phi^{\agentdeux}) = 0\\
				\implies& \Forall x,y \in \agentSet(d_{\Phi^{\agentun}}(x,y) = d_{\Phi^{\agentdeux}}(x,y))\\
				\iff & d_{\agentun} = d_{\agentdeux}.
			\end{align}
		\end{proof}
		\begin{corollary}[Identical agents are indiscernible under their shared local distance.]\label{indiscernibilityOfIdenticals}
			While the identity of indiscernibles (see \autoref{metricAxioms}) does not generally hold for $d_{\agentun}$, it does hold if
			\begin{enumerate}
				\item We consider the agents \emph{as agents}, and
				\item The distance is taken at one of the considered agents, i.e. $d_{\agentun}(\agentun, \cdot)$;
			\end{enumerate}\vspace{-1ex}
			\begin{align}
				d_{\agentun}(\agentun,\agentdeux) = 0 \iff \agentun \equiv \agentdeux.	
			\end{align}
		\end{corollary}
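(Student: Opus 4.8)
The plan is to prove the stated equivalence as two separate implications, after first recording \emph{why} both hypotheses in the corollary are indispensable. The pseudometric $d_{\agentun}$ integrates action-disagreements only over the paths that $\agentun$ actually generates, so it can vanish on a pair $\agentdeux, \agenttrois$ that differ solely on states $\agentun$ never visits; this is exactly how the identity of indiscernibles fails for $d_{\agentun}$ in general. Measuring the distance \emph{at} one of the two compared agents closes this loophole, because the paths over which we integrate are then precisely the paths on which agreement is demanded by agent identity (\autoref{defAgentIdentity}). The reverse implication $\agentun \equiv \agentdeux \implies d_{\agentun}(\agentun, \agentdeux) = 0$ is already the content of the remark preceding \autoref{identicalDistances}: if the set of $\agentun$-paths on which the two agents ever disagree is $\Phi^{\agentun}$-null, then $d_{\phi}(\agentun, \agentdeux) = 0$ almost surely, so its expectation $d_{\agentun}(\agentun, \agentdeux)$ is zero.

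The substance is the forward implication $d_{\agentun}(\agentun, \agentdeux) = 0 \implies \agentun \equiv \agentdeux$, for which I would reuse the machinery already built for \autoref{identicalDistances}. That proof derives from $d_{\agentun}(\agentun, \agentdeux) = 0$ the per-timestep vanishing \eqref{assumeForAllt}, namely $\expectation_{\phi \sim \Phi^{\agentun}}\left[d_{\phi_{t}}(\agentun, \agentdeux)\right] = 0$ for every $t \in \mathbb{N}$; this is where $\omega(t) > 0$ is used, to split a convergent nonnegative series term by term. Fixing $t$, the $t$-th contribution $d_{\actionSet}(\agentun(\phi_{t}'), \agentdeux(\phi_{t}'))$ is nonnegative, so a nonnegative random variable of zero expectation vanishes almost surely; and because $d_{\actionSet}$ is a genuine metric, $d_{\actionSet}(x, y) = 0 \iff x = y$ (\autoref{metricAxioms}), this reads
\begin{align}
	\mathbb{P}_{\Phi^{\agentun}}\left[\{\phi \,|\, \agentun(\phi_{t}') \neq \agentdeux(\phi_{t}')\}\right] = 0.
\end{align}

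It then remains to combine these events across time. The disagreement set of \autoref{defAgentIdentity} is the countable union $\{\phi \,|\, \Exists t\,(\agentun(\phi_{t}) \neq \agentdeux(\phi_{t}))\} = \bigcup_{t \in \mathbb{N}} \{\phi \,|\, \agentun(\phi_{t}') \neq \agentdeux(\phi_{t}')\}$, so countable subadditivity of $\mathbb{P}_{\Phi^{\agentun}}$ yields
\begin{align}
	\mathbb{P}_{\Phi^{\agentun}}\left[\{\phi \,|\, \Exists t\,(\agentun(\phi_{t}) \neq \agentdeux(\phi_{t}))\}\right] \leq \sum_{t \in \mathbb{N}} \mathbb{P}_{\Phi^{\agentun}}\left[\{\phi \,|\, \agentun(\phi_{t}') \neq \agentdeux(\phi_{t}')\}\right] = 0,
\end{align}
which is exactly the condition $\agentun \equiv \agentdeux$.

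The step I expect to require the most care is the passage from \emph{agreement at each fixed $t$ almost surely} to \emph{agreement at all $t$ simultaneously almost surely}: this is legitimate only because the time index ranges over the countable set $\mathbb{N}$, so the union of the individually null exceptional sets is again null. I would also flag explicitly that the term-by-term conclusion \eqref{assumeForAllt} imported from \autoref{identicalDistances} presumes $\omega(t) > 0$ for all $t$; were $\omega$ allowed to vanish at some index, disagreement there would be invisible to $d_{\agentun}$ and the forward implication would break at that $t$.
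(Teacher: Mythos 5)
Your proposal is correct, but it takes a genuinely different route from the paper's. The paper proves this corollary in three lines as a consequence of \autoref{identicalDistances}: it reads that theorem as giving $d_{\agentun}(\agentun,\agentdeux) = 0 \iff d_{\agentun} = d_{\agentdeux}$ and then concludes $d_{\agentdeux}(\agentun,\agentdeux) = d_{\agentun}(\agentun,\agentdeux) = 0$; it never explicitly lands on the measure-theoretic condition that \emph{defines} $\agentun \equiv \agentdeux$. You instead argue directly from the definition of $d_{\agentun}$ to the definition of agent identity: term-by-term vanishing of a convergent nonnegative series (which is where $\omega(t) > 0$ enters), almost-sure vanishing of each nonnegative integrand, the identity of indiscernibles for the genuine metric $d_{\actionSet}$ to convert zero distance into equality of actions, and countable subadditivity to pass from the per-$t$ null sets to the null disagreement set $\{\phi \mid \Exists t\,(\agentun(\phi_t') \neq \agentdeux(\phi_t'))\}$. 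What your approach buys is completeness and transparency: it actually produces the statement $\mathbb{P}_{\Phi^{\agentun}}\left[\{\phi \mid \Exists t\,(\agentun(\phi_t') \neq \agentdeux(\phi_t'))\}\right] = 0$ required by \autoref{defAgentIdentity}, and it surfaces two hypotheses the paper uses silently (a strictly positive discount function and a true metric on $\actionSet$); the countability of the time index, which you correctly flag as the delicate step, is exactly what makes the union of exceptional sets null. What the paper's approach buys is brevity and reuse of the heavier machinery of \autoref{identicalDistances}, at the cost of leaving the final identification with $\agentun \equiv \agentdeux$ implicit. Your handling of the reverse implication (a.s. vanishing of $d_{\phi}(\agentun,\agentdeux)$, hence of its expectation) matches the paper's remark preceding \autoref{identicalDistances}.
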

		\begin{proof}[Proof.]
			By \autoref{identicalDistances}, 
			\begin{align}
				d_{\agentun}(\agentun,\agentdeux) = 0 \iff& \Forall x,y \in \agentSet \left(d_{\agentun}(x,y) = d_{\agentdeux}(x,y)\right)\\
				\implies& d_{\agentdeux}(\agentun,\agentdeux) = d_{\agentun}(\agentun,\agentdeux)\\
				\implies& d_{\agentdeux}(\agentun,\agentdeux) = 0
			\end{align}
		\end{proof}
		\begin{remark}[Symmetry of $d_{\agentun}$]
			Notice that because $d_{\agentun}$ is an integral of distances between agents, which are symmetric, $d_{a}$ is symmetric;
			\begin{align}
				d_{\agentun}(\agentdeux, \agenttrois) = d_{a}(\agenttrois, \agentdeux).
			\end{align}			
		\end{remark}
		\autoref{indiscernibilityOfIdenticals} demonstrates that the agent identity relation of \autoref{defAgentIdentity} is reflexive; for every pair of agents $\agentun, \agentdeux$, $\agentun \equiv \agentdeux \iff \agentdeux \equiv \agentun$.
		
	\subsection{$d_{\agenttrois}(\agentun,\agentdeux) = 0$: When Distance 0 Does Not Imply Agent Identity}\label{dcab0}
		The picture provided above when the distance is taken \emph{at} one of the agents being compared is complicated when the comparison is made from a different vantage point. Let us consider some of these cases:
		\begin{enumerate}
			\item Sometimes, identical agents may be distinguished by a local distance,
			$$\agentun \equiv \agentdeux \land d_{\agenttrois}(\agentun,\agentdeux) > 0.$$
			\item Sometimes, different agents will not be distinguished,
			$$\agentun \not\equiv \agentdeux \land d_{\agenttrois}(\agentun,\agentdeux) = 0.$$
			\item Sometimes, identical distances imply identical agents,\\
			$$d_{\agentun} = d_{\agentdeux} \implies \agentun \equiv \agentdeux.$$ 
			\item Sometimes, identical distances don't,\\
			$$d_{\agentun} = d_{\agentdeux} \land \agentun \not\equiv \agentdeux.$$
		\end{enumerate}
		
		All of these problems stem from the issues of state visitation mentioned in \autoref{primitiveBehavior} and \cite{Stork2020}: unless $\agentun \equiv \agentdeux$, $\agentun$ may visit paths which $\agentdeux$ does not, and vice versa.
		
		\begin{example}[$\agentun \equiv \agentdeux$, but $d_{\agenttrois}(\agentun,\agentdeux) > 0$]
			In general, the functions $\agentun$ and $\agentdeux$ can be identical as agents, while they differ in their responses to unvisited paths. Then, from the perspective of an agent which visits such paths, $\agentun$ and $\agentdeux$ appear different.
		\end{example}
		\begin{example}[$\agentun \not\equiv \agentdeux$, but $d_{\agenttrois}(\agentun,\agentdeux) = 0$]
			Similarly, it is possible for $\agentun$ and $\agentdeux$ to be identical on every path which $\agenttrois$ visits, but differ when $\agentun$ or $\agentdeux$ control the process.
		\end{example}
		\begin{example}[$d_{\agentun} = d_{\agentdeux} \implies \agentun \equiv \agentdeux$]
			In general, local distances form a bijection with the distributions $\Phi^{\agentun}$ and the processes $(\mathcal{P}, \agentun)$, and thus $d_{\agentun} = d_{\agentdeux} \iff \Phi^{\agentun} = \Phi^{\agentdeux} \iff (\mathcal{P}, \agentun) = (\mathcal{P}, \agentdeux) \iff \agentun \equiv \agentdeux$. 
		\end{example}
		\begin{example}[$d_{\agentun} = d_{\agentdeux}$, but $\agentun \not\equiv \agentdeux$]
			However, when the set of agents is restricted, this is not necessarily true. If, for example, agents are assumed to be strictly Markov (see \autoref{strictlyMarkov}), then all that matters to the equivalence of the functions $d_{\agentun}$ and $d_{\agentdeux}$ is the states which they visit; if $\sigma(\phi_{t}', \action_{1})$ = $\sigma(\phi_{t}', \action_{2})$, $\action_{1} \neq \action_{2}$, then agents $\agentun$ and $\agentdeux$ which differ in their response to $\phi_{t}'$ could nonetheless produce ``identical'' distances, when the range of the distance is restricted to pairs of strictly Markov agents.
		\end{example}
		With so few assurances, the local distances may seem pointless. Are they nothing more than markers of identity? No, they are much more; \autoref{convergence} demonstrates that $d_{\agentun}(\agentun,\agentdeux) = 0 \iff \agentun \equiv \agentdeux$ is not a special case. Instead, the local distances \emph{themselves} are continuous in the agent space: as $\agentun$ and $\agentdeux$ approach one another under either local distance, i.e. as $|d_{\agentun}(\agentun,\agentdeux)|$ or $|d_{\agentdeux}(\agentun,\agentdeux)|$ goes to 0, and \emph{so does} $\sup_{x,y \in \agentSet}|d_{\agentun}(x,y) - d_{\agentdeux}(x,y)|$.

\section{The Agent Space}\label{theAgentSpace}
	The collection of local distances described in \autoref{pseudometrics} is an odd basis for the structure of an agent space; rather than a single, objective notion of distance, each agent $\agentdefaut$ defines its own local distance $d_{\agentdefaut}$. When paired with the set of agents, each local distance defines a pseudometric space $(\agentSet, d_{\agentdefaut})$, which describes the ways that agents differ \emph{on} $\Phi^{\agentdefaut}$.
	
	\autoref{daab0} establishes relationships between local distances, but only in the case of identical agents which differ as functions. We have not yet related the local distances of \emph{non}-identical agents. In particular, we have not established that a collection of local distances defines a single space.
	
	One interpretation of the collection of local distances is as a premetric (see \autoref{premetrics}), in a manner analogous to the Kullback-Leibler Divergence. However, $d_{\agentdefaut}$ can also be treated as more than a premetric; it need not be asymmetrical, nor need it violate the triangle inequality, because each agent defines a local distance that describes an internally-consistent pseudometric space. We continue this discussion in greater detail in \autoref{premetrics}, employing the premetric to provide a simple topology equivalent to that defined by convergence in the agent space (\autoref{defAgentConvergence}).
	
	In the next section, we unify the pseudometric spaces produced by each local distance to create an objective \emph{agent space}, whose topology is compatible with many important aspects of Reinforcement Learning, including standard function approximators (e.g. neural networks) and standard formulations of reward (see \autoref{uniformConvergenceofFunctions}).
	
	\subsection{Convergence in Agent Spaces}\label{convergence}
		\autoref{identicalDistances} proves that identical agents have the same local distance,
		\begin{align}
			\agentun \equiv \agentdeux \implies &d_{\agentun} = d_{\agentdeux}.
		\end{align}
		\autoref{indiscernibilityOfIdenticals} gives an important condition for equivalence: agents are identical, and thus have identical local distances, whenever $d_{\agentun}(\agentun,\agentdeux) = 0$. The next step in our analysis of the local distance is to consider the case where $\agentun$ and $\agentdeux$ are close to one another, but their distance is greater than 0. Consider the case
		\begin{align}
			0 < d_{\agentun}(\agentun,\agentdeux) < \delta,
		\end{align}
		with $\delta > 0$. In order to simplify the remainder of this section, we restrict ourselves to \emph{stochastic} agents. Let the metric on $\actionSet$, $d_{\actionSet}$, be the total variation distance $\TVD(\action_{1}, \action_{2})$. In this case, the logic of \autoref{identicalDistances} can be extended. \autoref{identicalDistances} demonstrates that two agents which are at every time \emph{identical} must produce identical distributions of paths, and, as a result, identical local distances. Consider a pair of agents $\agentun$ and $\agentdeux$, which have a distance less than $\delta$ on $\Phi^{\agentun}$, $d_{\agentun}(\agentun,\agentdeux) < \delta$, with $\omega(t) = 1$. Then, 
		\begin{alignat}{4}
			&&0 < d_{\Phi^{\prime \agentun}_{0}}(\agentun,\agentdeux) \leq \hspace{0.42em} d_{\agentun}(\agentun,\agentdeux) &< \delta\\
			\implies&& d_{\Phi^{\prime \agentun}_{0}}(\agentun,\agentdeux) = d_{\sigma_{0}}(\agentun,\agentdeux) &< \delta\\
			\iff&& \expectation_{\phi_{0}' \sim \sigma_{0}}\left[\TVD(\agentun(\phi_{0}'), \agentdeux(\phi_{0}'))\right] &< \delta\\
		\end{alignat}
		Since $\sigma_{0}$ does not vary with the agent, we have
		\begin{align}
			\Phi^{\prime \agentun}_{0} = \Phi^{\prime \agentdeux}_{0} \land \expectation_{\phi_{0}' \sim \sigma_{0}}\left[\TVD(\agentun(\phi_{0}'), \agentdeux(\phi_{0}'))\right] < \delta\\
			\implies \TVD(\Phi_{1}^{\prime \agentun}, \Phi_{1}^{\prime \agentdeux}) < \delta.\label{boundOnVarianceAt1}
		\end{align}
		Likewise,
		\begin{align}
			d_{\Phi_{1}^{\prime \agentun}}(\agentun, \agentdeux) \leq d_{\agentun}(\agentun, \agentdeux) < \delta
		\end{align}
		should imply that
		\begin{align}
			\TVD(\Phi_{2}^{\prime \agentun}, \Phi_{2}^{\prime \agentdeux}) < \delta,
		\end{align}
		except that since 
		\begin{align}
			\TVD(\Phi_{1}^{\prime \agentun}, \Phi_{1}^{\prime \agentdeux}) < \delta,\tag{\ref{boundOnVarianceAt1}}
		\end{align}
		we must start from a baseline of $\delta$, giving the bound $2\delta$. In general, we have
		\begin{align}
			\TVD(\Phi_{t}^{\prime \agentun}, \Phi_{t}^{\prime \agentdeux}) < t\delta.
		\end{align}
		This bound can be improved by noting that the total variation $\TVD(\Phi_{1}^{\prime \agentun}, \Phi_{1}^{\prime \agentdeux})$ can be bounded above by (and is in fact equal to) the smaller quantity
		\begin{align}
			d_{\Phi_{0}^{\prime \agentun}}(\agentun,\agentdeux) = d_{\sigma_{0}}(\agentun,\agentdeux),
		\end{align}
		yielding in the general case
		\begin{align}
			\TVD(\Phi_{t}^{\prime \agentun}, \Phi_{t}^{\prime \agentdeux}) < \sum_{i \in \mathbb{Z}_{t}}d_{\Phi_{i}^{\prime \agentun}}(\agentun,\agentdeux).
		\end{align}
		With our assumption that $\omega(t) = 1$, we can bound the right side of this inequality above, giving the looser inequality
		\begin{align}
			\TVD(\Phi_{t}^{\prime \agentun}, \Phi_{t}^{\prime \agentdeux}) < \sum_{i \in \mathbb{Z}_{t}}d_{\Phi_{i}^{\prime \agentun}}(\agentun,\agentdeux) < d_{a}(a,b) &< \delta\\
			\implies \TVD(\Phi_{t}^{\prime \agentun}, \Phi_{t}^{\prime \agentdeux}) &< \delta.
		\end{align}
		If the discount function is not the constant value $1$ (i.e. if $\omega(t) < 1$ for some $t$), as assumed above, the sum above gains a factor of $\frac{1}{\omega(i)}$:
		\begin{align}
			\TVD(\Phi_{t}^{\prime \agentun}, \Phi_{t}^{\prime \agentdeux}) < \sum_{i \in \mathbb{Z}_{t}}\frac{1}{\omega(i)}d_{\Phi_{i}^{\prime \agentun}}(\agentun,\agentdeux).
		\end{align}
		For simplicity we now assume that $\omega(t) = \gamma^{t}, 0 < \gamma < 1$, though the following results apply to a more general family of functions (for example, they apply to all monotonic super-exponential decay functions). 
		\begin{lemma}[$\TVD(\Phi_{t}^{\agentun}, \Phi_{t}^{\agentdeux})$ can be bound above by a function of $d_{\agentun}(\agentun,\agentdeux)$]\label{bindTotalVariationByLocalDistanceLemma}
			Notice that when 
			\begin{align}
				s < t \implies \omega(s) \geq \omega(t) \implies \frac{1}{\omega(s)} \leq \frac{1}{\omega(t)},
			\end{align} 
			so for a fixed distance $d_{\agentun}(\agentun,\agentdeux)$, the maximal total variation $\TVD(\Phi_{t}^{\agentun}, \Phi_{t}^{\agentdeux})$ is achieved when
			\begin{align}
				\Forall s \leq t\left(d_{\Phi^{\prime \agentun}_{s}}(\agentun,\agentdeux) = 0\right) \land  d_{\Phi^{\prime \agentun}_{t}}(\agentun,\agentdeux) = d_{\agentun}(\agentun,\agentdeux).
			\end{align}
			Thus, we can bound the total variation $\TVD(\Phi_{t}^{\agentun}, \Phi_{t}^{\agentdeux})$ above by $\frac{1}{\gamma^{t}}d_{\agentun}(\agentun,\agentdeux)$. Thus,
			\begin{align}
				d_{\agentun}(\agentun,\agentdeux) < \delta \implies \TVD(\Phi_{t}^{\agentun}, \Phi_{t}^{\agentdeux}) < \frac{1}{\gamma^{t}}\delta.\label{bindTotalVariationByLocalDistance}
			\end{align}
		\end{lemma}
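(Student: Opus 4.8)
The plan is to build directly on the inductive total-variation estimate assembled in the paragraphs preceding the lemma, convert the per-step action differences into their discounted contributions to $d_{\agentun}(\agentun,\agentdeux)$, and then observe that a linear functional subject to a fixed budget is maximized at its largest-weight coordinate. First I would record the recursive fact already developed there: each additional step of the process adds at most the per-step expected action distance $d_{\Phi_{i}^{\prime \agentun}}(\agentun,\agentdeux)$ to the total variation. This rests on the same mechanism as \autoref{identicalDistances} --- the shared transition function $\sigma$ is applied identically to both processes and so cannot increase total variation, while the only fresh divergence enters through the agents' actions. Accumulating these contributions over steps $0,\dots,t$, and including the action taken at time $t$ (which is exactly what separates the non-prime distribution $\Phi_{t}^{\agentun}$ from $\Phi_{t}^{\prime \agentun}$), yields
\begin{align}
	\TVD(\Phi_{t}^{\agentun}, \Phi_{t}^{\agentdeux}) \leq \sum_{i=0}^{t} d_{\Phi_{i}^{\prime \agentun}}(\agentun,\agentdeux).
\end{align}

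Next I would pass to discounted components. Setting $c_{i} = \omega(i)\, d_{\Phi_{i}^{\prime \agentun}}(\agentun,\agentdeux) \geq 0$, the definition of the distance at $\agentun$ gives $\sum_{i \in \mathbb{N}} c_{i} = d_{\agentun}(\agentun,\agentdeux)$, while each term of the bound becomes $d_{\Phi_{i}^{\prime \agentun}}(\agentun,\agentdeux) = \gamma^{-i} c_{i}$. Because $\omega(i)=\gamma^{i}$ is decreasing, $\gamma^{-i}$ is increasing in $i$, so every coefficient occurring in the sum --- all with index $i \leq t$ --- is at most $\gamma^{-t}$, whence
\begin{align}
	\TVD(\Phi_{t}^{\agentun}, \Phi_{t}^{\agentdeux}) \leq \sum_{i=0}^{t}\gamma^{-i} c_{i} \leq \gamma^{-t}\sum_{i=0}^{t} c_{i} \leq \frac{1}{\gamma^{t}}\, d_{\agentun}(\agentun,\agentdeux).
\end{align}
The ``maximal total variation is achieved'' clause is just the tightness of this chain: maximizing $\sum_{i \leq t}\gamma^{-i}c_{i}$ over the simplex $\{c_{i}\geq 0,\ \sum_{i}c_{i}=d_{\agentun}(\agentun,\agentdeux)\}$ places the entire budget on the largest-weight coordinate $i=t$, i.e. exactly when every earlier per-step distance vanishes and the agents differ only at time $t$. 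Substituting the hypothesis $d_{\agentun}(\agentun,\agentdeux) < \delta$ then gives $\TVD(\Phi_{t}^{\agentun}, \Phi_{t}^{\agentdeux}) < \frac{1}{\gamma^{t}}\delta$, which is \eqref{bindTotalVariationByLocalDistance}.

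The main obstacle is justifying the additive per-step recursion rigorously rather than treating it as self-evident. I would make it precise with a maximal coupling carried inductively: assuming the time-$i$ prime paths are coupled to agree off an event of probability $\TVD(\Phi_{i}^{\prime \agentun},\Phi_{i}^{\prime \agentdeux})$, let both agents act and then apply $\sigma$; the coupled paths newly diverge only where the action distributions $\agentun(\phi_{i}')$ and $\agentdeux(\phi_{i}')$ disagree, an event whose probability is bounded (under a maximal coupling of the actions) by the expected action total variation $d_{\Phi_{i}^{\prime \agentun}}(\agentun,\agentdeux)$. The one point requiring care is that this expected disagreement is measured under $\agentun$'s own prime-path distribution, whereas the coupling sees the agreement-conditioned distribution; since the integrand is nonnegative and the conditioned law is dominated by $\Phi_{i}^{\prime \agentun}$, restricting to the agreement event only decreases the integral, so the upper bound survives. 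A triangle-inequality bookkeeping of these disagreement probabilities, together with the non-expansiveness of the common kernel $\sigma$, reproduces $\TVD(\Phi_{i+1}^{\prime \agentun},\Phi_{i+1}^{\prime \agentdeux}) \leq \TVD(\Phi_{i}^{\prime \agentun},\Phi_{i}^{\prime \agentdeux}) + d_{\Phi_{i}^{\prime \agentun}}(\agentun,\agentdeux)$. As this is merely the $\delta$-generalization of the $0$-case induction already performed in \autoref{identicalDistances}, the genuinely new content beyond that theorem is the discount-weight rearrangement of the preceding paragraph.
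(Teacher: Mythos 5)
Your proposal is correct and follows essentially the same route as the paper: it uses the additive per-step total-variation recursion developed in the paragraphs preceding the lemma, inserts the $\frac{1}{\omega(i)}$ factors to express everything in terms of the discounted components of $d_{\agentun}(\agentun,\agentdeux)$, and bounds each weight by the largest one, $\gamma^{-t}$. The only differences are that you make the per-step recursion rigorous via a maximal-coupling argument and explicitly account for the time-$t$ action separating $\Phi_{t}^{\agentun}$ from $\Phi_{t}^{\prime\agentun}$ --- both points the paper leaves implicit --- so your write-up is, if anything, more careful than the original.
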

		\autoref{bindTotalVariationByLocalDistanceLemma} enables us to prove our next theorem, the limit equivalent of \autoref{indiscernibilityOfIdenticals}. Let us begin with a definition.
		\begin{definition}[Convergence in the Agent Space]\label{defAgentConvergence}
			We say that a sequence of agents $x_{n}$ converges to an agent $\agentdefaut$ if and only if the local distance between the agents in the sequence and $\agentdefaut$ goes to 0;
			\begin{align}
				x_{n} \rightarrow \agentdefaut \iff \Forall \varepsilon > 0 \, \Exists m \in \mathbb{N} \,(n > m \implies d_{\agentdefaut}(\agentdefaut,x_{n}) < \varepsilon).
			\end{align}
			
		\end{definition}
		\begin{theorem}[The Limit Behavior of Local Distances]\label{convergingAgentsConvergingDistances}
			Let $x_{n}$ be a sequence of agents converging to $\agentun$. Then,
			\begin{enumerate}
				\item $\Forall t \in \mathbb{N}(\lim_{x_{n}\rightarrow \agentun}\TVD(\Phi_{t}^{x_{n}},\Phi_{t}^{\agentun}) = 0)$,\label{agentConvergenceImpliesMeasureConvergenceForEachT}
				\item $d_{x_{n}} \rightarrow d_{\agentun}$, and \label{agentConvergenceImpliesDistanceConvergence}
				\item $d_{x_{n}}(x_{n}, \agentun) \rightarrow 0$.
			\end{enumerate}
		\end{theorem}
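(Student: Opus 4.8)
The plan is to establish the three claims in sequence, each resting on Lemma~\ref{bindTotalVariationByLocalDistanceLemma} and, for the later parts, on the earlier ones. I keep the standing assumptions $d_{\actionSet} = \TVD$ (so the per-step distances lie in $[0,1]$) and $\omega(t) = \gamma^{t}$, and I write the local distance as the discounted average of per-step action distances,
$$d_{\agentun}(x,y) = \sum_{t \in \mathbb{N}} \gamma^{t}\, \expectation_{\phi_{t}' \sim \Phi_{t}^{\prime \agentun}}\big[\, d_{\actionSet}(x(\phi_{t}'), y(\phi_{t}')) \,\big],$$
observing that the $t$-th summand depends on $\agentun$ only through the marginal $\Phi_{t}^{\prime \agentun}$ of the prime path at time $t$.

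For claim~\ref{agentConvergenceImpliesMeasureConvergenceForEachT} I would simply unwind the lemma. Fix $t$ and $\varepsilon > 0$; since $x_{n} \to \agentun$ means $d_{\agentun}(\agentun, x_{n}) \to 0$, choose $m$ so that $n > m$ forces $d_{\agentun}(\agentun, x_{n}) < \gamma^{t}\varepsilon$. Then \eqref{bindTotalVariationByLocalDistance} gives $\TVD(\Phi_{t}^{x_{n}}, \Phi_{t}^{\agentun}) < \gamma^{-t}\cdot\gamma^{t}\varepsilon = \varepsilon$. As $t$ is fixed, $\gamma^{-t}$ is a harmless constant and the claim is immediate.

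Claim~\ref{agentConvergenceImpliesDistanceConvergence} is the heart of the theorem, and I read ``$d_{x_{n}} \to d_{\agentun}$'' as uniform convergence $\sup_{x,y \in \agentSet}|d_{x_{n}}(x,y) - d_{\agentun}(x,y)| \to 0$. Differencing the two series term by term and applying the standard estimate $|\expectation_{P}[g] - \expectation_{Q}[g]| \le \|g\|_{\infty}\,\TVD(P,Q)$ (the TVD normalization affects only an absolute constant), I would obtain
$$|d_{\agentun}(x,y) - d_{x_{n}}(x,y)| \le \sum_{t \in \mathbb{N}} \gamma^{t}\, \TVD(\Phi_{t}^{\prime \agentun}, \Phi_{t}^{\prime x_{n}}),$$
a bound that does not involve $x$ or $y$. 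The obvious attempt---inserting the lemma's bound $\TVD < \gamma^{-t}d_{\agentun}(\agentun,x_{n})$ into every term---fails outright, because $\gamma^{t}\cdot\gamma^{-t} = 1$ makes the series diverge; this cancellation is the main obstacle. The remedy is a head/tail split. Given $\varepsilon > 0$, pick $T$ with $\sum_{t > T}\gamma^{t} < \varepsilon/2$ and control the tail using only the trivial bound $\TVD \le 1$. On the finite head $t \le T$, apply the lemma to get $\sum_{t \le T}\gamma^{t}\cdot\gamma^{-t}d_{\agentun}(\agentun,x_{n}) = (T+1)\,d_{\agentun}(\agentun,x_{n})$, which tends to $0$ in $n$ for the now-fixed $T$; choosing $n$ large makes the head $< \varepsilon/2$. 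Since neither estimate depends on $x$ or $y$, the resulting convergence is uniform.

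The third claim then follows from claim~\ref{agentConvergenceImpliesDistanceConvergence}. By symmetry of $d_{\agentun}$ and the definition of convergence, $d_{\agentun}(x_{n},\agentun) = d_{\agentun}(\agentun,x_{n}) \to 0$, while uniform convergence gives $|d_{x_{n}}(x_{n},\agentun) - d_{\agentun}(x_{n},\agentun)| \le \sup_{x,y}|d_{x_{n}}(x,y) - d_{\agentun}(x,y)| \to 0$; adding these through the triangle inequality on $\mathbb{R}$ yields $d_{x_{n}}(x_{n},\agentun) \to 0$. The only bookkeeping I would watch is the passage between prime and non-prime path distributions when citing the lemma (which shifts the exponent of $\gamma$ by one, harmlessly) and the exact constant in the expectation estimate, neither of which disturbs the geometric decay driving the argument.
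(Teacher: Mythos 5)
Your proposal is correct and follows essentially the same route as the paper: part~1 by unwinding Lemma~\ref{bindTotalVariationByLocalDistanceLemma} with $\delta_{\varepsilon}=\gamma^{t}\varepsilon$, part~2 by a head/tail split in $t$ with the tail controlled by geometric decay and the head by the total-variation bound, and part~3 by combining the previous parts. Your handling of the head (term-by-term, giving $(T+1)\,d_{\agentun}(\agentun,x_{n})$) and of part~3 (writing $d_{\agentun}(\agentun,x_{n})\rightarrow 0$ rather than the paper's literal ``$=0$'') are minor, and in the latter case slightly cleaner, variations on the same argument.
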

		\begin{proof}[Proof of \ref{agentConvergenceImpliesMeasureConvergenceForEachT}]
			By \eqref{bindTotalVariationByLocalDistance}, we have for any fixed $t$ and any agent $x_{n}$
			\begin{align}
				d_{\agentun}(\agentun,x_{n}) < \delta \implies \TVD(\Phi_{t}^{\agentun}, \Phi_{t}^{x_{n}}) < \frac{1}{\gamma^{t}}\delta.
			\end{align}
			By assumption, for every $\delta > 0$ there is an $m \in \mathbb{N}$ with $n > m \implies d_{\agentun}(\agentun,x_{n}) < \delta$. For any $\varepsilon > 0$, there is a $\delta_{\varepsilon} > 0$ with $\frac{1}{\gamma^{t}}\delta_{\varepsilon} < \varepsilon$. Thus, we can select an $n \in \mathbb{N}$ with
			\begin{alignat}{4}
				m > n & \implies& d_{\agentun}(\agentun,x_{n}) < &\,\delta_{\varepsilon}\\
				& \implies& \TVD(\Phi_{t}^{\agentun}, \Phi_{t}^{x_{n}}) < &\,\varepsilon.
			\end{alignat}
		\end{proof}
		\begin{proof}[Proof of \ref{agentConvergenceImpliesDistanceConvergence}]
			Per the proof of \ref{agentConvergenceImpliesMeasureConvergenceForEachT}, we have for any $\varepsilon > 0$ and any $t \in \mathbb{N}$ an $m$ giving $n > m \implies \TVD(\Phi_{t}^{\agentun}, \Phi_{t}^{x_{m}}) < \varepsilon$. Let $\actionSet_{\text{max}}$ be the bound on $\actionSet$ (in the case of total variation, $\actionSet_{\text{max}} = 1$). For each path $\phi$, we have
			\begin{align}
				d_{\phi}(\agentun,\agentdeux) - d_{\phi_{t}}(\agentun,\agentdeux) \leq& \sum_{i =  t+1}^{\infty}\gamma^{i}\,\actionSet_{\text{max}}\\
				= \frac{\gamma^{t+1}}{1 - \gamma}\actionSet_{\text{max}} =& \frac{\gamma^{t+1}}{1 - \gamma},
			\end{align}
			and we have the analogous bound for $\Phi^{\agentun}$
			\begin{align}
				d_{\agentun} - \sum_{i=0}^{i=t}d_{\Phi_{i}^{\agentun}} \leq \frac{\gamma^{t+1}}{1 - \gamma}.
			\end{align}
			\begin{remark}[Notation for the Distance on Distributions of Truncated Paths]
				We now need to manipulate terms of this type, for which a bit of notation will be useful: Let
				\begin{alignat}{3}
					&d_{\agentun}^{t}\! &&= \sum_{i=0}^{i=t} d_{\Phi_{i}^{\agentun}}, \text{ and}\\
					&d_{\agentun}^{t+}\! &&= \hspace{-0.3em}\sum_{i=t+1}^{i=\infty}\hspace{-0.35em} d_{\Phi_{i}^{\agentun}}.
				\end{alignat}
				Further, for any agent $\agentun$ we can decompose $d_{\agentun}$ into
				\begin{alignat}{4}
					d_{\agentun}& = \sum_{i=0}^{t} &&d_{\Phi_{t}^{\agentun}} &&+ \sum_{i=t+1}^{\infty}d_{\Phi_{t}^{\agentun}}\\
					d_{\agentun}& = &&d_{\agentun}^{t} &&+ (d_{\agentun} - d_{\agentun}^{t})\\
					d_{\agentun}& = &&d_{\agentun}^{t} &&+ d_{\agentun}^{t+}.
				\end{alignat}
			\end{remark}\noindent
			Notice that the maximum value of $d_{\agentun}^{t}$ is
			\begin{align}
				\sum_{i=0}^{t}\gamma^{t}\actionSet_{\text{max}} = \frac{1 - \gamma^{t + 1}}{1 - \gamma}\actionSet_{\text{max}},
			\end{align}
			and we can bound $d_{\agentun}^{t+}$ from above as well,
			\begin{align}
			d_{\agentun}^{t+} \leq& \,\max(d_{\agentun}) - \max(d_{\agentun}^{t})\\
			=& \,\frac{1}{1 - \gamma}\actionSet_{\text{max}} - \frac{1 - \gamma^{t + 1}}{1 - \gamma}\actionSet_{\text{max}}\\
			=& \,\frac{\gamma^{t+1}}{1-\gamma}\actionSet_{\text{max}}.
			\end{align}
			Clearly, as $t\rightarrow \infty$, this bound goes to 0.
			
			Let $\varepsilon > 0$, $\varepsilon' = \frac{\varepsilon(1 - \gamma)}{2\actionSet_{\text{max}}(1 - \gamma^{t + 1})}$, and let $\delta_{\varepsilon'}$ be as above. Then, we have
			\begin{alignat}{3}
				&& d_{\agentun}(\agentun,\agentdeux) &< \delta_{\varepsilon'}\\
				&\implies& \TVD(\Phi_{t}^{\agentun}, \Phi_{t}^{\agentdeux}) &< \varepsilon' = 
				\frac{\varepsilon(1 - \gamma)}{\actionSet_{\text{max}}(1 - \gamma^{t + 1})}\\
				&\implies& d_{\agentun}^{t} - d_{\agentdeux}^{t} &< \frac{1 - \gamma^{t + 1}}{1 - \gamma}\actionSet_{\text{max}}\TVD(\Phi_{t}^{\agentun}, \Phi_{t}^{\agentdeux})\\
				&& &< \frac{1 - \gamma^{t + 1}}{1 - \gamma}\actionSet_{\text{max}}\varepsilon' =  \frac{\varepsilon}{2}.
			\end{alignat}
			Thus, we have
			\begin{align}
				d_{\agentun} - d_{\agentdeux} &= d_{\agentun}^{t} + d_{\agentun}^{t+} - (d_{\agentdeux}^{t} + d_{\agentdeux}^{t+})\label{boundda-db}\\
				&\leq d_{\agentun}^{t} - d_{\agentdeux}^{t} + 2\left[\actionSet_{\text{max}}\frac{\gamma^{t}}{1-\gamma}\right]\\
				&< \frac{\varepsilon}{2} + 2\left[\actionSet_{\text{max}}\frac{\gamma^{t}}{1-\gamma}\right].\label{almost2epsilon}
			\end{align}
			Now, set $t$ great enough that $2\left[\actionSet_{\text{max}}\frac{\gamma^{t}}{1-\gamma}\right] < \frac{\varepsilon}{2}$, and set $\agentdeux = x_{n}$. Per part \ref{agentConvergenceImpliesMeasureConvergenceForEachT}, we can select an $m$ with $n > m \implies d_{\agentun}(\agentun,x_{n}) < \delta_{\frac{\varepsilon}{2}}$. Then, combining lines \eqref{boundda-db} and \eqref{almost2epsilon}, we have
			\begin{align}
				d_{\agentun} - d_{x_{m}} < \varepsilon.
			\end{align}
		\end{proof}
		\begin{proof}[Proof of 3.]
			Applying \ref{agentConvergenceImpliesDistanceConvergence}, we have
			\begin{align}
				d_{x_{n}} \rightarrow d_{\agentun}.
			\end{align}
			Since $d_{\agentun}(\agentun,x_{n}) = 0$, we have
			\begin{align}
				d_{x_{n}}(\agentun,x_{n}) \rightarrow d_{\agentun}(\agentun,x_{n}) = 0.
			\end{align}
		\end{proof}
		
		This theorem demonstrates that agents which are close in the agent space have close perspectives and produce close local distances. In fact, the proof of \autoref{agentConvergenceImpliesDistanceConvergence} of \autoref{convergingAgentsConvergingDistances} demonstrates that the local distances $d_{\agentdefaut}$ which represent those perspectives are \emph{uniformly continuous} in the agent. Further, \autoref{agentConvergenceImpliesMeasureConvergenceForEachT} of \autoref{convergingAgentsConvergingDistances} demonstrates that similar agents produce similar distributions of truncated paths - not just similar distance functions.
		
		In the next section we consider a loose method of interpreting the local distances: the interpretation of the local distance as a function of \emph{two}, rather than three, agents, fixing the vantage point at the first agent being compared. This allows us to describe the local distance as a \emph{premetric}. We use this fact to define the topology of the agent space in \autoref{agentSpaceTopology}.
	\subsection{$d_{x}(x,y)$ as a Premetric}\label{premetrics}
		A premetric is a generalization of a metric which relaxes several properties, giving the very general definition
		\begin{definition}[Premetric]
			A function $D: X \rightarrow \mathbb{R}^{+} \cup \{0\}$ is called a premetric if \cite[23]{Arkhangelskii1990}
			\begin{align}
				D(x,x) = 0.
			\end{align}
			such a premetric is called \emph{separating} if it also satisfies
			\begin{align}
				D(x,y) = 0 \iff x = y.
			\end{align}
		\end{definition}
		Many important functions satisfy this definition, including the Kullback-Liebler Divergence. 
		\begin{remark}[The Local Distance is a Premetric]
			Notice that the function 
			\begin{align}
			D(\agentun,\agentdeux) = d_{\agentun}(\agentun,\agentdeux)
			\end{align}
			is a separating premetric.
		\end{remark}
		Important for the practical use of the local distances, this premetric (along with the other structures of the agent space) is able to describe the differences between agents \emph{and} between the distributions of paths which they produce without actually sampling those distributions; $d_{\agentun}(\agentdeux,\agenttrois)$ compares the distributions $\Phi^{\agentdeux}$ and $\Phi^{\agenttrois}$ but only requires information about the distribution $\Phi^{\agentun}$ the functions $\agentdeux$ and $\agenttrois$. This is valuable because in Reinforcement Learning it is typically simple to calculate the actions which an agent would take from that agent's parameters, but information about the distribution usually needs to be sampled - an expensive operation. This is especially valuable if many nearby agents need to be compared (e.g. because the agents being considered are based on a single locus agent). Operations which involve comparing a pair of agents using the standard of a third like this are common in Reinforcement Learning. For example, the $\mathcal{Q}^{\agentun}$ \autoref{defActionValueFunction} function is often used to judge the quality of the actions of other agents $\agentdeux \neq \agentun$.
		
		In the next section we describe a topology on the agent space which we will take as canonical (i.e. as \emph{the} topology of the agent space). There are two basic ways to understand the topology: it may be understood as the topology of the premetric space given by the premetric on the agent space described above, or it may be understood as the topology given by the \emph{convergence} relation of \autoref{defAgentConvergence}. These are identical. In fact, \autoref{defAgentConvergence} can be defined using only the premetric description of the local distance.
		
	\subsection{The Topology of the Agent Space}\label{agentSpaceTopology}
		Let us start by providing two equivalent definitions of the topology of the agent space: one definition of its open sets, and another definition of its closed sets.
		\begin{definition}[The Topology of $(\agentSet, d)$: Open Sets]
			We say that a set $U \in \agentSet$ is open if and only if about every point $x \in U$, $U$ admits an open disk of positive radius:
			\begin{align}
				\Forall x \in U \Exists r > 0 \,(\{y \,|\, d_{x}(x,y) < r\} \subset U).\label{defOpenSetsOfAgentSpaces}
			\end{align}
		\end{definition}
		
		\begin{definition}[The Topology of $(\agentSet, d)$: Closed Sets]		
			We say that a set $X \in \agentSet$ is closed if and only if it contains its limit points; iff for every convergent sequence $x_{n}$ with $x_{n} \in \agentSet$: $X$ is closed iff
			\begin{align}
				\Forall x_{n}(\Forall n \in \mathbb{N}\,(x_{n} \in X) \implies \lim x_{n} \in X).
			\end{align}
		\end{definition}
		These definitions suffice, in fact, to define the topology of any premetric (or metric). It may be demonstrated that these definitions produce the same topology (for example, by remarking that open sets in metric spaces may be characterized by the criterion \eqref{defOpenSetsOfAgentSpaces}). It is important to note that this, along with the premetric version of the agent space, represents a sort of lower-bound on the structure which the local distances describe on the set of agents. In particular, the local distances may prove useful beyond simple problems of limits. In \cite{Allen2021}, we employ the local distances for exploration in an implementation of Novelty Search.
	
		In the next section, we demonstrate that the topology of the agent space is compatible with many of the most important aspects of Reinforcement Learning. In particular, we show that standard formulations of reward are continuous in the agent space, and that the agent space itself is continuous in the parameters of most agent approximators, demonstrating that the agent space is a valid structure for the set of agents, and for Reinforcement Learning more generally.
		
\section{Functions of the Agent}\label{continuity}
	The topology of the agent space carries information about many important aspects of the decision process and its interaction with agents, including the distributions of truncated paths. However, we have not yet demonstrated any relationship between the agent space and the object of Reinforcement Learning: the expected reward of the agent, $J(\agentdefaut)$. In this section we demonstrate that an important class of reward functions (summable reward functions, \eqref{sumReward}) are continuous functions of the agent in the topology of the agent space. We begin with a simple condition for the agent to be a continuous function of the parameters of a function approximator. We then use the continuity of finite distributions of paths established in \autoref{agentConvergenceImpliesMeasureConvergenceForEachT} of \autoref{convergingAgentsConvergingDistances} in the agent space to prove that the expectation of reward is a continuous function of the agent.
	
	\subsection{Parameterized Agents}\label{parameterizedAgents}
		Let $f$ be a function approximator parameterized by a set of real numbers $\theta$, taking truncated prime paths into a set of actions. Then,
		\begin{align}
			f:\mathbb{R}^{n} \times \Phi^{\prime} \rightarrow \actionSet.
		\end{align}
		If we delay the selection of the truncated path, we may understand $f$ as a function from $\mathbb{R}^{n}$ into the set of agents:
		\begin{align}
			f:\mathbb{R}^{n} \rightarrow \actionSet^{\Phi^{\prime}}
			\iff f:\mathbb{R}^{n} \rightarrow \agentSet.
		\end{align}
		Notice that we have returned to the pre-quotient notion of an agent - the set of functions from the set of truncated prime paths to the set of actions before the equivalence relation of \autoref{defAgentIdentity} is applied. To better distinguish these functions, let us denote the pre-quotient set $F$. The matter of demonstrating that a particular function approximator is a continuous function from its parameters to the agent space may be divided into two parts: it must be demonstrated that the function approximator is a continuous function from the set of parameters to the set of functions, and it must be demonstrated that the quotient operation itself is a continuous function from the set of functions to the set of agents. We begin by demonstrating the continuity of the quotient operation.
		
		Let us assume the $L^{\infty}$ metric on the set of functions and denote the map taking a function $f$ to an agent $\agentdefaut$ by $Q: (F, L^{\infty}) \rightarrow (\agentSet, d)$. Then, the quotient operation which takes the set of agents to the space of agents is continuous if and only if for every convergent sequence $x_{n}$ in $(F, L^{\infty})$, $Q(x_{n})$ converges.
		\begin{theorem}[The Agent Identity Quotient Operation is Continuous]
			The quotient operation defined by \autoref{defAgentIdentity} is a continuous function from the $F$ to $\agentSet$.
		\end{theorem}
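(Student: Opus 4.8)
The plan is to establish continuity by proving sequential continuity, which is enough here because the domain $(F, L^{\infty})$ is a metric space and hence first countable. So I would take an arbitrary sequence $x_{n}$ in $F$ with $x_{n} \rightarrow x$ in the $L^{\infty}$ metric, i.e.
\begin{align}
	L^{\infty}(x, x_{n}) = \sup_{\phi' \in \Phi'} d_{\actionSet}\big(x(\phi'), x_{n}(\phi')\big) \rightarrow 0,
\end{align}
and show that $Q(x_{n}) \rightarrow Q(x)$ in $(\agentSet, d)$. By \autoref{defAgentConvergence} this target is precisely the statement $d_{x}(x, x_{n}) \rightarrow 0$, where $d_{x}$ is the local distance of the limit agent. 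Before estimating, I would note that $d_{x}$ is genuinely a function of the class $Q(x)$ and not of the chosen representative, which is exactly the content of \autoref{identicalDistances}; this legitimizes evaluating the local distance on the representatives $x$ and $x_{n}$.

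The core of the argument is a single uniform estimate. Expanding the distance at $x$ and exchanging the nonnegative, summable sum with the expectation (the same interchange that was used to define $d_{x}$), one has
\begin{align}
	d_{x}(x, x_{n}) = \expectation_{\phi \sim \Phi^{x}}\left[\sum_{t \in \mathbb{N}} \omega(t)\, d_{\phi_{t}}(x, x_{n})\right], \qquad d_{\phi_{t}}(x, x_{n}) = d_{\actionSet}\big(x(\phi_{t}'), x_{n}(\phi_{t}')\big).
\end{align}
Every truncated prime path $\phi_{t}'$ is an element of $\Phi'$, so each single-step summand is controlled by the supremum: $d_{\phi_{t}}(x, x_{n}) \leq L^{\infty}(x, x_{n})$, uniformly in $t$ and in the sampled path $\phi$. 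Factoring this bound out and writing $\Omega = \sum_{t \in \mathbb{N}} \omega(t) < \infty$ yields
\begin{align}
	d_{x}(x, x_{n}) \leq L^{\infty}(x, x_{n}) \sum_{t \in \mathbb{N}} \omega(t) = \Omega\, L^{\infty}(x, x_{n}).
\end{align}
Since $\Omega$ is a fixed finite constant and $L^{\infty}(x, x_{n}) \rightarrow 0$, it follows that $d_{x}(x, x_{n}) \rightarrow 0$, i.e. $Q(x_{n}) \rightarrow Q(x)$. Hence $Q$ is sequentially continuous and therefore continuous.

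The calculation itself is trivial, so the real content, and the part I would be most careful about, lies elsewhere. First, I would confirm that sequential continuity upgrades to continuity; this is safe because only the domain needs to be first countable, and $(F, L^{\infty})$ is metric. Second, and more importantly, I would make sure the convergence statement concerns the quotient classes rather than the arbitrary functions representing them; this is precisely why \autoref{identicalDistances} is needed, since it shows the local distance depends only on the identity class of its base agent. A final point to justify rather than assume is the exchange of the infinite sum and the expectation, which rests on the nonnegativity of the terms together with $\Omega < \infty$, exactly as in the construction of the distance at $x$.
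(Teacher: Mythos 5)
Your proposal is correct and follows essentially the same route as the paper: both proofs bound each per-path, per-time summand $d_{\phi_{t}}(x, x_{n})$ by the uniform norm $d^{\infty}(x_{n}, x)$ and then sum the discount weights to obtain $d_{x}(x_{n}, x) \leq \Omega\, d^{\infty}(x_{n}, x)$, from which convergence in the agent space follows. Your added remarks on first countability and on well-definedness of $d_{x}$ under the quotient (via \autoref{identicalDistances}) are sensible justifications of steps the paper leaves implicit, but they do not change the argument.
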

		\begin{proof}
			Let $x_{n}$ be a $L^{\infty}$-convergent sequence of functions converging to $x$
			\begin{gather}
				x_{n} : \Phi^{\prime} \rightarrow \actionSet,\\
				\Forall \varepsilon > 0 \Exists m \in \mathbb{N} \Forall n > m\,(d^{\infty}(x_{n}, x) < \varepsilon).
			\end{gather}
			Then, we must show that
			\begin{align}
				\Forall \varepsilon > 0 \Exists m \in \mathbb{N} \Forall n > m\,(d_{Q(x)}(Q(x_{n}), Q(x)) < \varepsilon).
			\end{align}
			Consider the definition of $d_{x}$:
			\begin{align}
				d_{x} =& \sum_{t \in \mathbb{N}}\omega(t) d_{\Phi^{\prime x}_{t}}(x_{n},x)\\
				d_{\Phi^{\prime x}_{t}}(x_{n}, x) =& \expectation_{\phi'_{t} \sim \Phi^{\prime x}_{t}}\left[d_{\phi'}(x_{n},x)\right].
			\end{align}
			Clearly, we have
			\begin{align}
				d_{\Phi^{\prime x}_{t}}(x_{n}, x) &\leq \sup_{\phi' \in \Phi^{\prime}}d_{\phi'}(x_{n},x)\\
				&= d^{\infty}(x_{n}, x).
			\end{align}
			Thus, $d_{x}(x_{n}, x) \leq \sum_{t \in \mathbb{N}}\omega(t) d^{\infty}(x_{n}, x)$. Recall that $\Omega = \sum_{t \in \mathbb{N}}\omega(t) < \infty$. Thus,
			\begin{align}
				\Forall \varepsilon > 0 \Exists m \in \mathbb{N}\,\Forall n > m (d_{x}(x_{n}, x) \leq \varepsilon)
			\end{align}
			exists because an integer $m$ which satisfies
			\begin{align}
				\Forall \frac{\varepsilon}{\Omega} > 0 \Exists m \in \mathbb{N}\, \Forall n > m\left(d^{\infty}(x_{n}, x) < \frac{\varepsilon}{\Omega}\right)
			\end{align}
			exists, by assumption.
		\end{proof}
		To finish the demonstration that a particular function approximator gives agents continuous in its parameters, then, it remains only to show that the function approximator is $L^{\infty}$-continuous (uniformly continuous) in the parameters of the approximator. One class of function approximators which satisfies this is feedforward neural networks, such as those discussed in \cite{Hornik1989}.\footnote{Specifically, neural networks with continuous, bounded activation functions are uniformly continuous in their parameters.}
		
	\subsection{Reward and the Agent Space}
		In order for the agent space to be useful for the \emph{problem} of Reinforcement Learning, it must be related to the object of Reinforcement Learning: \emph{reward}.
		\begin{align}
			R : \Phi \rightarrow \mathbb{R}\tag{\ref{defRewardEquation}}
		\end{align}
		We noted in \autoref{defReward} that reward can frequently be described by a sum,
		\begin{align}
			R(\phi) =& \sum_{t \in \mathbb{N}} r(\phi_{t}(s), \phi_{t}(\action)).\tag{\ref{sumReward}}
		\end{align}
		We also noted that this sum is often weighted by a discount function $\omega(t)$. Discount functions are employed because they offer general conditions under which the reward of a path (and thus its expectation) is bounded: so long as $\Omega$ is finite and the immediate reward $r$ is bounded, so too is the sum \eqref{sumReward}.
		
		This formulation of reward has several valuable properties which can be extracted: the reward function can be extended from $\Phi$ to include truncated paths:
		\begin{align}
			&R: \Phi \cup \bigcup_{t \in \mathbb{N}}\Phi_{t} \rightarrow \mathbb{R}\\
			&R(\phi_{t}) = \sum_{i < t} r(\phi_{t}(s), \phi_{t}(\action)).
		\end{align}
		Clearly, for any path $\phi$ for which $R(\phi)$ exists we have
		\begin{align}
			\lim_{t \rightarrow \infty} R(\phi_{t}) = R(\phi).
		\end{align}
		If the immediate reward $r$ is bounded and the sum is weighted by a discount function $\omega$ with finite sum $\Omega$ then $R$ is bounded and we have the stronger condition
		\begin{align}
			\Forall \varepsilon > 0 \Exists t \in \mathbb{N} \Forall \phi \in \Phi \left(|R(\phi) - R(\phi_{t})| < \varepsilon \right).\label{uniformConvergenceofFunctions}
		\end{align}
		That is, such a summable discount function converges uniformly to its value as $t \rightarrow \infty$.
		
		\begin{theorem}[Functions Continuous in the Agent Space]
			Let $R$ be a bounded real function of the set of paths and truncated paths, and let $J$ be the expectation of $R$ on the distribution of paths $\Phi^{\agentdefaut}$,
			\begin{gather}
				J(\agentdefaut) = \expectation_{\phi \sim \Phi^{\agentdefaut}}\left[R(\phi)\right],\\
				R: \Phi \cup \bigcup_{t \in \mathbb{N}}\Phi_{t} \rightarrow \mathbb{R},\\
				\sup_{\phi, \varphi \in \Phi \cup \bigcup_{t \in \mathbb{N}}\Phi_{t}} |R(\phi) - R(\varphi)| = \overline{R},
			\end{gather}
			and let $R$ satisfy \eqref{uniformConvergenceofFunctions}
			\begin{align}
				\Forall \varepsilon > 0 \Exists t \in \mathbb{N} \Forall \phi \in \Phi \left(t' > t \implies |R(\phi) - R(\phi_{t'})| < \varepsilon \right)\tag{\ref{uniformConvergenceofFunctions}}.
			\end{align}
			Then, $J$ is a continuous function with respect to the agent space $(\agentSet, d)$.
		\end{theorem}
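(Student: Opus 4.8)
The plan is to prove sequential continuity, which is the natural reading of ``continuous with respect to $(\agentSet, d)$'' given that the topology is generated by the convergence relation of \autoref{defAgentConvergence}. I would fix an agent $\agentun$, take an arbitrary sequence $x_{n} \to \agentun$, and show $J(x_{n}) \to J(\agentun)$. The whole argument rests on splitting each path reward into a truncated part $R(\phi_{t})$, which depends only on the first $t$ state-action pairs, and a tail $R(\phi) - R(\phi_{t})$, which the hypothesis \eqref{uniformConvergenceofFunctions} controls uniformly in $\phi$. Inserting $R(\phi_{t})$ under both expectations and applying the triangle inequality, I would write, for a truncation level $t$ to be chosen,
\begin{align}
|J(x_{n}) - J(\agentun)| \leq{}& \left|\expectation_{\phi \sim \Phi^{x_{n}}}[R(\phi) - R(\phi_{t})]\right| \nonumber \\
&+ \left|\expectation_{\phi \sim \Phi^{x_{n}}}[R(\phi_{t})] - \expectation_{\phi \sim \Phi^{\agentun}}[R(\phi_{t})]\right| \nonumber \\
&+ \left|\expectation_{\phi \sim \Phi^{\agentun}}[R(\phi_{t}) - R(\phi)]\right|.
\end{align}

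Given $\varepsilon > 0$, the uniform convergence hypothesis \eqref{uniformConvergenceofFunctions} supplies a single $t$, depending on $\varepsilon$ alone, with $|R(\phi) - R(\phi_{t})| < \varepsilon/3$ for \emph{every} path $\phi$. Because this bound is uniform in $\phi$, it survives integration against any distribution of paths, so the first and third terms above are each at most $\varepsilon/3$ simultaneously under $\Phi^{x_{n}}$ and $\Phi^{\agentun}$, and for every $n$.

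With $t$ now frozen, the middle term compares the expectation of the \emph{same} bounded function $R(\phi_{t})$ under the two truncated-path distributions. Since $R(\phi_{t})$ depends only on $\phi_{t}$, its expectation under $\Phi^{\agentdefaut}$ equals its expectation under the marginal $\Phi_{t}^{\agentdefaut}$, and the standard estimate for the difference of expectations of a bounded function under two measures gives
\begin{align}
\left|\expectation_{\phi \sim \Phi^{x_{n}}}[R(\phi_{t})] - \expectation_{\phi \sim \Phi^{\agentun}}[R(\phi_{t})]\right| \leq \overline{R}\cdot\TVD(\Phi_{t}^{x_{n}}, \Phi_{t}^{\agentun}),
\end{align}
the oscillation of $R$ on truncated paths being at most $\overline{R}$ (this matches the total-variation convention already used in \autoref{convergingAgentsConvergingDistances}). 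I would then invoke \autoref{agentConvergenceImpliesMeasureConvergenceForEachT} of \autoref{convergingAgentsConvergingDistances}, which for this fixed $t$ yields $\TVD(\Phi_{t}^{x_{n}}, \Phi_{t}^{\agentun}) \to 0$; choosing $n$ large makes the middle term smaller than $\varepsilon/3$. Summing the three bounds gives $|J(x_{n}) - J(\agentun)| < \varepsilon$ for all sufficiently large $n$. (If $\overline{R} = 0$ then $R$ is constant and $J$ is trivially continuous, so one may assume $\overline{R} > 0$ when dividing.)

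The delicate point, and the reason \eqref{uniformConvergenceofFunctions} is assumed rather than mere pointwise convergence $R(\phi_{t}) \to R(\phi)$, is the interchange of the two limits $t \to \infty$ and $n \to \infty$. Because the path distribution $\Phi^{x_{n}}$ itself moves with $n$, I cannot send $n \to \infty$ first and $t \to \infty$ afterward; I must freeze a truncation level that tames the tails \emph{before} exploiting the per-$t$ convergence of the truncated distributions. Uniform convergence in $\phi$ is exactly what licenses freezing $t$ once and for all, and the total-variation convergence supplied by \autoref{convergingAgentsConvergingDistances} is what then closes the remaining gap; everything else is routine.
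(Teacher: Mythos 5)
Your proposal is correct and follows essentially the same route as the paper's proof: the same three-term triangle-inequality decomposition through $\expectation[R(\phi_{t})]$, the same use of \eqref{uniformConvergenceofFunctions} to fix a truncation level $t$ uniformly in $\phi$ before taking $n$ large, and the same appeal to \autoref{agentConvergenceImpliesMeasureConvergenceForEachT} of \autoref{convergingAgentsConvergingDistances} together with the bound $\overline{R}\cdot\TVD(\Phi_{t}^{x_{n}},\Phi_{t}^{\agentun})$ on the middle term. Your explicit remark on the order of the two limits and the $\overline{R}=0$ edge case are welcome clarifications but do not change the argument.
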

		\begin{proof}			
			Let us demonstrate that $J$ is a continuous function of $\agentdefaut$ by showing that for any convergent sequence $x_{n}$ converging to $\agentdefaut$,
			\begin{align}
				\lim_{n\rightarrow \infty} J(x_{n}) = J(\agentdefaut).
			\end{align}
			Thus, our goal is to demonstrate that
			\begin{align}
				\Forall \varepsilon \Exists m \in \mathbb{N} \left(n > m \implies |J(a) - J(x_{n})| < \varepsilon\right).
			\end{align}
			By assumption of \eqref{uniformConvergenceofFunctions},
			\begin{align}
				\Exists t \in \mathbb{N}\, \Forall \phi \in \Phi\left(t' > t \implies |R(\phi) - R(\phi_{t})| < \frac{\varepsilon}{3}\right).\label{eventuallyEpsilonOverTwo}
			\end{align}
			Consider the expectation of the reward of the truncated paths of $\agentdefaut$,
			\begin{align}
				\expectation_{\phi \sim \Phi^{\agentdefaut}}R(\phi_{t}).
			\end{align}
			By \eqref{eventuallyEpsilonOverTwo}, for an appropriate value of $t$ we have
			\begin{align}
				|R(\phi) - R(\phi_{t})| < \frac{\varepsilon}{3}.
			\end{align}
			Thus, we have
			\begin{align}
				&\left|J(\agentdefaut) - \expectation_{\phi \sim \Phi^{\agentdefaut}}R(\phi_{t})\right| = \left|\expectation_{\phi \sim \Phi^{\agentdefaut}}R(\phi) - \expectation_{\phi \sim \Phi^{\agentdefaut}}R(\phi_{t})\right|\\
				= &\expectation_{\phi \sim \Phi^{\agentdefaut}}|R(\phi) - R(\phi_{t})|\\
				< &\expectation_{\phi \sim \Phi^{\agentdefaut}}\frac{\varepsilon}{3} = \frac{\varepsilon}{3}
			\end{align}
			Now, let us consider \autoref{agentConvergenceImpliesMeasureConvergenceForEachT} of \autoref{convergingAgentsConvergingDistances}, which demonstrates that for any $t \in \mathbb{N}$ and any sequence of agents $x_{n}$ converging to $\agentdefaut$, $\TVD(\Phi_{t}^{\agentdefaut}, \Phi_{t}^{x_{n}})$ goes to 0 as $n \rightarrow \infty$, so we have
			\begin{align}
				\Forall t \in \mathbb{N} \Exists m \in \mathbb{N} \left(n > m \implies \TVD(\Phi_{t}^{\agentdefaut}, \Phi_{t}^{x_{n}}) < \frac{\varepsilon}{3 \overline{R}}\right).
			\end{align}
			Then we have
			\begin{align}
				\left|\expectation_{\phi \sim \Phi^{x_{n}}}R(\phi_{t}) - \expectation_{\phi \sim \Phi^{\agentdefaut}}R(\phi_{t})\right| < \overline{R} \frac{\varepsilon}{3 \overline{R}} = \frac{\varepsilon}{3}.
			\end{align}
			Thus, for sufficiently large $t$ and $m$, we have for $n > m$
			\renewcommand{\qedsymbol}{\textbf{A.T.C.R.}}
			\begin{align}
				&\left|J(a) - J(x_{n})\right|\\
				= &\left|J(a) - J(x_{n}) + \left(\expectation_{\phi \sim \Phi^{\agentdefaut}}R(\phi_{t}) - \expectation_{\phi \sim \Phi^{\agentdefaut}}R(\phi_{t})\right) + \left(\expectation_{\phi \sim \Phi^{x_{n}}}R(\phi_{t}) - \expectation_{\phi \sim \Phi^{x_{n}}}R(\phi_{t})\right)\right|\\
				= &\left|\left(J(a) - \expectation_{\phi \sim \Phi^{\agentdefaut}}R(\phi_{t})\right) - \left(J(x_{n}) - \expectation_{\phi \sim \Phi^{x_{n}}}R(\phi_{t})\right) + \left(\expectation_{\phi \sim \Phi^{\agentdefaut}}R(\phi_{t}) - \expectation_{\phi \sim \Phi^{x_{n}}}R(\phi_{t})\right)\right|\\
				< &\left|J(a) - \expectation_{\phi \sim \Phi^{\agentdefaut}}R(\phi_{t})\right| + \left|J(x_{n}) - \expectation_{\phi \sim \Phi^{x_{n}}}R(\phi_{t})\right| + \left|\expectation_{\phi \sim \Phi^{\agentdefaut}}R(\phi_{t}) - \expectation_{\phi \sim \Phi^{x_{n}}}R(\phi_{t})\right|\\
				< &\,\,\frac{\varepsilon}{3} + \frac{\varepsilon}{3} + \frac{\varepsilon}{3} = \varepsilon.
			\end{align}		
		\end{proof}
		\renewcommand{\qedsymbol}{\textbf{Q.E.D.}}
	
\section{Conclusion}\label{conclusion}
	In this work we consider the problem of exploration in Reinforcement Learning. We find that exploration is understood and well-defined in the \emph{dynamic paradigm} of Richard Bellman \cite{Bellman1954}, but that it is not well-defined for other optimization paradigms used in Reinforcement Learning. In dynamic Reinforcement Learning, exploration serves to collect the information necessary for dynamic programming, as described in \autoref{incompleteInformation}. In non-dynamic Reinforcement Learning - what we call \emph{naïve} Reinforcement Learning - the situation is more complex. We find that dynamic methods of exploration are effective in naïve methods, but that the explanation of their effect offered by dynamic programming \emph{cannot} explain their efficacy in naïve methods, which do not use the information required by dynamic programming. 
	
	This leads us to several questions: Why are exploration methods designed to provide information useless to naïve Reinforcement Learning nonetheless effective for naïve methods? What should the definition of exploration be for naïve methods? To what extent does this more general kind of exploration contribute to the effectiveness of dynamic exploration in dynamic methods? To resolve these questions, we consult the commonalities of several dynamic methods of exploration, finding two: first, their dynamic justification, and second, their mechanism: considering different agents which are deemed likely to demonstrate different distributions of paths.
	
	Of these, only the mechanism might serve to explain dynamic exploration's efficacy in naïve methods, and we take this mechanism as the definition of naïve exploration. This, definition, however, leaves a gap: under it, totally random experimentation with agents is explorative. This may be effective in small problems, but it is unprincipled. We find a principle in \emph{Novelty Search} \cite{Lehman2011}: in exploration one should consider agents which are \emph{novel} relative to the agents which have already been considered. To determine novelty, they use the distance between the \emph{behavior} of an agent and those considered in the past.
	
	However, we find their notion of novelty deficient for the purpose of defining naïve exploration; they require that function which determines the behavior of an agent be separately and manually determined for each reinforcement learning problem. Fortunately, this view is not held universally in the literature. We consider a cluster of behavior functions which we call \emph{primitive behavior} \cite{Meyerson2016, Parker-Holder2020, Stork2020}. Primitive behavior is powerful: because it is composed of the actions of an agent, it is possible in some processes for primitive behavior to fully determine the distribution of paths, and thus to determine every notion of behavior derived from $(\mathcal{P}, \agentdefaut)$.
	
	Unfortunately, primitive behavior has several flaws. First, only in certain finite processes may the primitive behavior of an agent fully determine $\Phi^{\agentdefaut}$. Second, primitive behavior can inappropriately distinguish between agents (see \autoref{defAgentIdentity}). Third, it necessarily retains the manual selection requirement in decision processes with infinitely many states. In \autoref{pseudometrics}, we describe a more general notion of the distance between agents - one which does not require a behavior function. Instead, we define a structure on the set of agents itself. We call the resulting structure an \emph{agent space}.
	
	In \autoref{theAgentSpace} and \autoref{continuity}, we describe the topology of the agent space, demonstrating that it carries information about many important aspects of Reinforcement Learning, including the distribution of paths produced by an agent and standard formulations of the reward of an agent. Using these facts, we demonstrate that, for many function approximators, reward is a continuous function of the parameters of an agent. 
	
	In a future work \cite{Allen2021}, we use techniques described in \autoref{epilogue} to join the agent space with Novelty Search to perform Reinforcement Learning using a naïve, scalable learning system similar to \emph{Evolution Strategies} \cite{Salimans2017}. We test this method in a variety of processes and find that it performs similarly to ES in problems which require little exploration, and is strictly superior to ES in problems in which exploration is necessary.
	
\separatorpage
\addcontentsline{toc}{section}{References}
\printbibliography
\separatorpage

\ifx\appendixon\True
\begin{appendices}

\section{Exploration Methods}\label{explorationMethods}
	This appendix contains a brief review of the exploration methods mentioned in \autoref{exploration}.
	\subsection{Undirected Exploration}\label{undirected}
		\subsubsection{$\varepsilon$-Greedy}
			One of the simplest undirected exploration algorithms is the $\varepsilon$-greedy algorithm described in \autoref{defEpsilonGreedy}. 
			
			In \autoref{defEpsilonGreedy}, we assumed that $\agentdefaut_{\theta}$ was a real function approximator, but $\varepsilon$-Greedy can be applied to a broader range of intermediates. All that is necessary is that the underlying function approximator $\agentdefaut_{\theta}$ indicate a single action - referred to as the ``greedy'' action, a reference to Reinforcement Learning algorithms which explicitly predict the value of actions (see \autoref{defActionValueFunction}). In such algorithms, the ``greedy action'' is the one which is predicted to have the highest value. We call functions with this property \emph{deterministic}, and the greedy action their deterministic action.
			
			\begin{definition}[$\varepsilon$-Greedy]\label{epsilonGreedy}
				An $\varepsilon$-greedy output function renders a deterministic agent stochastic by changing its action with probability $\varepsilon \in [0,1]$ to one drawn from a uniform distribution over the set of actions, $U$, and retaining the deterministic action with probability $1-\varepsilon$:
				\begin{align}
				O_{\varepsilon}(\agentdefaut_{\theta}(s)) =
				\begin{cases}
				O_{\text{greedy}}(\agentdefaut_{\theta}(s)) & \text{with probability } 1-\varepsilon\\
				U(\actionSet) & \text{with probability }\hspace{0.88em} \varepsilon\\
				\end{cases}
				\end{align}
				where $O_{\text{greedy}}$ is the output function which takes $\agentdefaut_{\theta}$'s greedy action.\footnote{For notational simplicity we assume that the function underlying $\agentdefaut$ is parameterized ($\agentdefaut_{\theta}$). This is not necessary to apply the methods of this section.}
			\end{definition}
			\begin{remark}[$\varepsilon = 0$]
				Notice that the case $\varepsilon = 0$ collapses to the deterministic agent $\agentdefaut_{\theta}$, and the case $\varepsilon = 1$ is the uniformly random agent.
			\end{remark}
		
			The major benefit of $\varepsilon$-greedy sampling is that in a finite decision process, \emph{every} path has a non-zero probability (provided $\varepsilon > 0$). Unfortunately, that can only be accomplished by assigning a diminutive probability to each of those paths. As a path deviates further from the paths generated by the agent $\agentdefaut_{\theta}$, its probability decreases exponentially with each action which deviates from $\agentdefaut_{\theta}$.
			
			That restriction is not necessarily bad for optimization; by visiting paths which require few changes to the actions of $\agentdefaut_{\theta}$, the newly discovered states are \emph{nearly} accessible to $\agentdefaut_{\theta}$, which may make them more poignant to learning algorithms, which typically change locus agents only by small amounts in each epoch.
			
			While $\varepsilon$-Greedy can be applied to processes with finite or infinite sets of actions, the next method, Thompson Sampling, can only be defined for processes with finite sets of actions.
		
		\subsubsection{Thompson Sampling and Related Methods}\label{thompsonSampling}
			Other major undirected exploration methods \edit{find citations for these claims or at least justify them in some way.} operate using a similar mechanism to $\varepsilon$-Greedy, to very different effect. Just like $\varepsilon$-Greedy sampling, Thompson Sampling acts as $O$, taking the range of a function $\agentdefaut_{\theta}$ to the set of probability distributions of actions. Whereas $\varepsilon$-Greedy produces a distribution which varies only in the agent's deterministic\footnote{Different learning algorithms approximate different objects; in $\mathcal{Q}$-Learning (\autoref{incompleteInformation}), $\agentdefaut_{\theta}$ approximates the \emph{action-value} of a state-action pair $(s,\action)$; in policy gradients, its meaning is dependent on the output function $O$.} action, Thompson Sampling produces a distribution which varies with the agent's output for \emph{each} action; unlike $\varepsilon$-Greedy, Thompson Sampling is continuous in the output of $\agentdefaut_{\theta}$.
			
			\begin{definition}[Thompson Sampling]
				A Thompson Sampling output function produces a distribution of actions from the output of a real function approximator $\agentdefaut_{\theta}$. Thompson Sampling requires that $\agentdefaut_{\theta}(s)$ be a real vector of dimension $|\actionSet|$, whose elements are nonnegative and have sum 1; $\agentdefaut_{\theta}(s) : S \rightarrow \{x \in \mathbb{R}^{|\actionSet|} \,|\, x_{i} \geq 0, \sum x_{i} = 1\}$. The Thompson Output Function produces the distribution of actions
				\begin{align}
				O_{\text{Thompson}}(\agentdefaut_{\theta}(s)) =
				\begin{cases}
				\action_{1} & \text{with probability } \agentdefaut_{\theta}(s)_{1},\\
				\action_{2} & \text{with probability } \agentdefaut_{\theta}(s)_{2},\\
				\action_{3} & \text{with probability } \agentdefaut_{\theta}(s)_{3},\\
				\vdots\\
				\action_{|\actionSet|} & \text{with probability } \agentdefaut_{\theta}(s)_{|\actionSet|}.\\
				\end{cases}
				\end{align}
			\end{definition}
			Many function approximators do not naturally produce values which fall in the set of acceptable inputs to $O_{\text{Thompson}}$. Several methods may be employed to rectify these approximators with Thompson Sampling. One common method is known as \emph{Boltzmann Exploration} (or as a \emph{softmax} layer) \cite[37]{Sutton2015}:
			\begin{definition}[Boltzmann Exploration]
				A Boltzmann Exploration output function produces a distribution of actions from the output of a real function approximator $\agentdefaut_{\theta}$. It is most easily understood as a ``pre-processing'' for Thompson Sampling function. Let $\excoef$ be a real parameter ($\excoef$ is sometimes called temperature \cite{Edman2019}). Then,
				\begin{align}
				\text{softmax}&(x): R^{|\actionSet|} \rightarrow \{x \in \mathbb{R}^{|\actionSet|} | x_{i} \geq 0, \sum x_{i} = 1\},\\
				\text{softmax}&(\agentdefaut_{\theta}(s))_{i} = \frac{e^{\agentdefaut_{\theta}(s)_{i}\excoef}}{\sum e^{\agentdefaut_{\theta}(s)_{k}\excoef}}.
				\end{align}
				Which can be composed with the regular Thompson output function:
				\begin{align}
				O_{\text{Boltzmann}}(\agentdefaut_{\theta}(s)) = O_{\text{Thompson}}(\text{softmax}(\agentdefaut_{\theta}(s))).
				\end{align}
			\end{definition}
			Boltzmann Exploration is among the most common methods of creating functions which are compatible with Thompson Sampling because of its beneficial analytical properties: it is continuous, has a simple derivative (especially important for back-propagation), and guarantees that every action has a non-zero probability.
			
			A different kind of augmentation of Thompson Sampling and other stochastic output functions is \emph{Entropy Maximization} \cite{Williams1991}. In contrast with Boltzmann Exploration, entropy maximization modifies the learning process itself through the immediate reward function.
			\begin{definition}[Entropy Maximization]\label{defEntropyMaximization}
				Entropy Maximization is a method used with stochastic agents which adds the conditional \emph{entropy} $h(\action_{t}|\agentdefaut,s_{t})$ of the action with respect to the distribution $\agentdefaut(s)$ to the immediate reward,
				\begin{align}
				r_{\text{entropy}}(\phi_{t}) = r(\phi_{t}) + \excoef h(\action_{t}|\agentdefaut,s_{t}),
				\end{align}
				where $\excoef$ is a positive real parameter of the optimizer \cite{Williams1991}.
			\end{definition}
			These entropy \emph{bonuses} cause the learner to consider both the reward which an agent attains and its propensity to select a diversity of actions. The learner is thus encouraged to consider agents which express greater ``uncertainty'' in their actions, slowing the convergence of the locus agent.
			
			With respect to dynamic exploration, there is little difference between $\varepsilon$-Greedy and Thompson Sampling. Both algorithms explore the process by selecting agents which allow them to experience unexplored aspects of the process. From the naïve perspective, this similarity is overshadowed by a difference in their analytical properties: in finite problems, Thompson Sampling agents act from a \emph{continuous} set of actions, whereas $\varepsilon$-Greedy agents use a [modified] finite set of actions.
			
			The exploration methods discussed in this section are fairly homogeneous, precisely because they are \emph{un}directed; the only way to explore \emph{without} direction is to inject stochasticity into the optimization process. Conversely, the methods of the next section are considerably more diverse; there are many ways to direct an explorative process.
		
		\subsection{Directed Exploration}\label{directed}
			The variety of directed exploration methods make the genre difficult to summarize. Perhaps the simplest description of directed methods as the complement of undirected methods. Undirected exploration methods use exclusively stochastic means to explore; they do not incorporate any information specific to the process. Directed exploration methods thus include any exploration method which \emph{does} incorporate such information \cite{Thrun1992}. This section describes two major families of directed exploration methods: \emph{count-based} and \emph{prediction-based} through a pair of representative methods \cite{Weng2020}. We begin with count-based exploration, exemplified by \#Exploration \cite{Tang2017}.
			
			Count-based methods \cite{Tang2017, Bellemare2016} count the number of times that each state (or state-action pair, see \autoref{incompleteInformation}) has been observed in the course of learning, and use that count to inform the course of learning, to encourage visitation of scarcely visited states. Count-based algorithms have appealing guarantees in finite processes \cite{Bellemare2016}, but lose those guarantees in infinite settings. Despite this, count-based exploration continues to inspire exploration techniques in the infinite setting. \#Exploration is a recent method which discretizes infinite problems, imitating traditional count-based methods.
			
			\begin{definition}[\#Exploration]\label{hashtagExploration}
				\#Exploration \cite{Tang2017} is an algorithm which augments the immediate reward function with an \emph{exploration} bonus in the same manner as the entropy bonus of \autoref{defEntropyMaximization}. However, instead of encouraging the learner to pursue agents which attempt new actions or visit rarely visited states, \#Exploration uses \emph{hash codes}. The hash codes are generated by a hashing function $H(s)$ which discretizes an unmanageable (i.e. large or infinite) set of states into a manageable finite set of hash codes. Using these hash codes as a proxy for states, \#Exploration assigns its exploration bonus in much the same way as a traditional count-based method:\edit{Remember to systematically apply this kind of formatting to the final draft. And then again to those versions which are forced into another form factor.}
				\begin{align}
				r_{\#}(\agentdefaut,s) = r(\agentdefaut, s) + \frac{\excoef}{\sqrt{n(H(s))}}.
				\end{align}
				Here, $r_{\#}$ is the combination of the immediate reward function and the exploration bonus for that state, $n$ is the state-count function, a tally of the number of times that a state with the same hash code as $s$, $H(s)$, has been visited, and $\excoef$ is a positive real number. \#Exploration pursues its goal as a count-based method by assigning greater exploration bonuses to states which have been visited fewer times.
			\end{definition}
			The next class of exploration methods in this section is called \emph{prediction-based} exploration. Whereas count-based methods estimate the new information that an action will collect with a measurement of the learner's experience of each state, prediction-based methods attempt to estimate the \emph{quality}, rather than the mere \emph{quantity} of the collected information. To do this, they employ a separate modeling method which predicts the next state of the process. The better that prediction is, the higher the quality of the information which the learner has about that part of the process.
			
			\begin{definition}\label{stadieDefinition}
				In \emph{Incentivizing Exploration}\cite{Stadie2015}, Stadie et al. estimate the quality of information which the executor has gathered by using that information to train a \emph{dynamics model} $\mathcal{M}$ to estimate the next state $s_{t+1}$ from $(s_{t}, \action_{t})$\footnote{This algorithm uses ``state encodings'', similar to the hash codes of \#Exploration, rather than states.}. They reason that if $\mathcal{M}$ accurately estimates the next state (as measured by the distance between $\mathcal{M}(s_{t}, \action_{t})$ and $s_{t+1}$), then the executor has gathered better information about that state-action pair. Thus, they assign exploration bonuses so as to encourage consideration of agents which visit state-action pairs for which the distance $||\mathcal{M}(s_{t}, \action_{t}) - s_{t+1}||$ is large:
				\begin{align}
				r_{\text{Stadie}}(s_{t},\action_{t}) = r(s_{t},\action_{t}) + \excoef\frac{||\mathcal{M}(s_{t},\action_{t}) - s_{t+1}||}{tC},
				\end{align}
				where $\beta$ is a constant, and $C$ is decay constant (i.e. an increasing function of the learning epoch).
			\end{definition}
	\section{Behavior Functions in the Literature}\label{behaviorFunctions}
		Many of the behavior functions which have been proposed have been influenced by the behavior functions of Lehman and Stanley's initial work, and by their advice on the subject in \emph{Abandoning Objectives: Evolution Through the Search for Novelty Alone}:
		\begin{quote}
			Although generally applicable, novelty search is best suited to domains with deceptive fitness landscapes, intuitive behavioral characterizations, and domain constraints on possible expressible behaviors.\hfill - Lehman and Stanley \cite[200]{Lehman2011}
		\end{quote}
		This passage provides important insight for those who wish to apply Novelty Search to new domains in the tradition of Lehman and Stanley, but their suggestions also make it difficult to analyze Novelty Search independent of the choice of behavior function. This is especially problematic for the open-ended use of Novelty Search; without a general notion of behavior, there are few options for the comparison of behavior functions to one another or their absolute evaluation. In a given decision process, one can compare the outcomes of Novelty Search processes with different behavior functions by considering the diversity of behaviors which they produce, \emph{but this diversity must be measured by one of these or a different notion of behavior}. One could consider each behavior function's propensity to find high-quality agents, but this is a return to the just-abandoned objectives.
		
		Lehman and Stanley are forced to compare their behavior functions in the maze environment along these lines. In a discussion of the degrees of ``conflation'' (assignment of the same behavior to different agents) present in their behavior functions: ``[I]f some dimensions of behavior are completely orthogonal to the objective, it is likely better to conflate all such orthogonal dimensions of behavior together rather than explore them.'' To address these issues and make it possible to apply Novelty Search to a wider range of decision processes, several authors have considered \emph{general} behavior functions \cite{Gomez2009, Conti2017, Meyerson2016, Parker-Holder2020, Stork2020}.
		
		The rest of this appendix provides a brief survey of some behavior functions present in the literature, beginning with the specific functions of \emph{Abandoning Objectives} \cite{Lehman2011}, and proceeding to general behavior functions, including those of \cite{Gomez2009, Conti2017}.
		
		\subsection{The Behavior Functions of \emph{Abandoning Objectives}}
		The main decision processes in \emph{Abandoning Objectives} are two-dimensional mazes. They consider several behavior functions in this environment, all of which are based on the position of the agent. The primary behavior function they consider is what we call the \emph{final position behavior function}:
		
		\begin{definition}[Final Position Behavior]
			\begin{align}
			\behavior(\agentdefaut) = p_{t\textsubscript{max}}
			\end{align}
			Where $p_{t\textsubscript{max}}$ is the position at the final time in a sampled truncated trajectory $\phi_{t_{\max}}$.
		\end{definition}
		They consider another positional behavior function: the position of the agent over time.	
		\begin{definition}[Position Over Time Behavior]
			\begin{align}
			\behavior(\agentdefaut) = (p_{t_{1}}, p_{t_{2}}, ... , p_{t_{N-1}}, p_{t_{N}})
			\end{align}
			For $0 \leq t_{i} < t_{i + 1} \leq t_{\max}$
		\end{definition}
		
		As noted in \autoref{notAMetric}, while these are functions into $\mathbb{R}^{2}$ in the case of final position behavior, and $\mathbb{R}^{2N}$ in the case of position over time behavior, neither $\mathbb{R}^{2}$ nor $\mathbb{R}^{2N}$ are treated as metric spaces. Instead, both of these are equipped with a symmetric \cite[23]{Arkhangelskii1990}: the square of the usual Euclidean distance.
		
		These examples reveal the intuitions about behavior which Lehman and Stanley relied upon to implement Novelty Search. First, rather than reflecting the actions of an agent alone, both of these behavior functions reflect the results of the agent's interaction with the process - in fact, they reflect the position of the agent, a function of the state. Second, these behavior functions are distilled, considering only one or a few points of time $t_{i}$.
		
		In the other environment, Biped Locomotion, Lehman and Stanley take a different approach to selecting the times $t_{i}$, opting to collect spatial information once per simulated second. Explaining that difference, they write: ``Unlike in the maze domain, temporal sampling is necessary because the temporal pattern is fundamental to walking.'' This is a strange argument, since reinforcement learning problems are \emph{defined} by their temporality (see \autoref{defDecisionProcess}).
		
		\subsection{General Behavior Functions}
		Since the publication of \cite{Lehman2008}, Lehman and Stanley's first paper on Novelty Search, many authors have sought general notions of behavior \cite{Gomez2009, Conti2017, Meyerson2016, Parker-Holder2020, Stork2020}. This section analyzes several of these behavior functions. Let us begin with a simple notion of distance on the set of agents which is defined with for any method using a parameterized agent:
		\begin{example}[The distance of $\theta$]
			Consider two agents, $\agentun$ and $\agentdeux$, represented by function approximators of the same form. Assume that they are parameterized by an ordered list of real numbers, and let their parameters be $\theta_{\agentun}$ and $\theta_{\agentdeux}$. Then,
			\begin{align}
				\behavior(\agentdefaut_{\theta_{\agentdefaut}}) = \theta_{\agentdefaut}
			\end{align}
			is a behavior function and
			\begin{align}
			d(\behavior(\agentun),\behavior(\agentdeux)) = ||\theta_{\agentun} - \theta_{\agentdeux}||
			\end{align}
			is a metric on this set of behaviors.
		\end{example}
		
		Although it is a metric (though it does not satisfy the indiscernibility of identicals under the quotient operation of \autoref{defAgentIdentity}), this distance is unsatisfactory in several ways. For example, it can assign an agent a non-zero distance from itself if the agent can be parameterized by two different sets of parameters (see \autoref{daab0} and \cite{Parker-Holder2020}).
		
		An early work of Gomez et al. \cite{Gomez2009} introduced a behavior function which maps agents to a concatenation of truncated trajectories. They then use the \emph{normalized compression distance} (NCD) as a metric on this set of finite sequences.
		\begin{definition}[Gomez et al. Behavior]
			Gomez et al. \cite{Gomez2009} define the behavior of an agent as a concatenation of a number of observed truncated paths, 
			\begin{align}
			\behavior(\agentdefaut) = \phi_{\agentdefaut,1}^{n_{1}}|\phi_{\agentdefaut,2}^{n_{2}}|...
			\end{align}
			
			As the distance on this set of behaviors $\behavior(\agentSet)$, Gomez uses the \emph{normalized compression distance} $\text{NCD}(\behavior(\agentun), \behavior(\agentdeux))$, which is an approximation of the mutual information of a pair of strings:
			\begin{align}
			\text{NCD}(\agentun,\agentdeux) = \frac{C(\behavior(\agentun)|\behavior(\agentdeux)) - \min\{C(\behavior(\agentun)), C(\behavior(\agentdeux))\}}{\max\{C(\behavior(\agentun)), C(\behavior(\agentdeux))\}}.
			\end{align}
			Where $C$ is the length of the compressed sequence.
		\end{definition}
		
	\section{Agent Spaces In Practice}\label{epilogue}
		While the agent space is in general not a metric space (see \autoref{pseudometrics}, \autoref{dcab0}, and \autoref{equivalentDistances}), this does not proscribe its use in e.g. Novelty Search, which has long used metric-adjacent spaces (see \autoref{notAMetric}). In an upcoming work \cite{Allen2021}, we describe an approach to the Reinforcement Learning problem based on an extension of Evolution Strategies \cite{Salimans2017} which combines naïve reward and Novelty Search \cite{Lehman2011} of the agent space, selecting the locus agent as the perspective for comparisons during each epoch, to solve a variety of reinforcement learning problems (see \autoref{implementationSketch}).
		
		Novelty Search and the Agent Space to be naïve artifacts, but we cannot a priori restrict them to naïve learning methods. For example, \cite{Stork2020} uses several versions of primitive behavior (see \autoref{primitiveBehavior}) to explain ``reward behavior correlation[s]'', showing that agents which are similar under certain primitive behavior functions perform similarly. In \autoref{continuity} we demonstrate that the Agent Space completes this line of inquiry by demonstrating analytically that reward is a continuous function of the agent in the agent space. The reasoning of the Agent Space in \autoref{defAgentIdentity} also provides a clean explanation for the observation of \cite{Stork2020} that certain states may be totally unimportant to performance.
		
		\subsection{When $d_{a}$ is Equivalent to $d_{b}$}\label{equivalentDistances}
			While local distances do not generally produce homeomorphic topologies, it is worthwhile to note that many basic problems in the literature \emph{do} have agents which produce homeomorphic topologies, especially in the Markov and strictly Markov cases. Let us begin by considering the equivalence of the measures underlying local distances:
			\begin{definition}[Equivalence of Measures]
				A pair of measures $\mu$, $\nu$ each on a measurable space $(M, \Sigma)$ are said to be \emph{equivalent} iff \cite[157]{Klenke}
				\begin{align}
				\Forall X \in \Sigma(\mu(X) = 0 \iff \nu(X) = 0)
				\end{align}
			\end{definition}
			When the measures underlying $d_{\agentun}$ and $d_{\agentdeux}$, $\Phi^{\agentun}$ and $\Phi^{\agentdeux}$, are equivalent, they produce equivalent topologies. In general, this is rare. In most decision processes, some paths can only be visited by a subset of agents. However, there are certain circumstances where these distances are guaranteed to be equivalent.
			
			Clearly, if two agents differ with non-zero probability, then the distributions of \emph{truncated paths} which they produce must also differ. However, this does not apply when only Markov agents are considered (see \autoref{defMarkovProperty}). In this case, if the distributions of \emph{states}, rather than truncated paths, are equivalent, then the distances produce equivalent topologies.
			
			\begin{remark}[Notation for the Probability of a State]
				The next few results require a simple notation for the probability of a state occurring in a distribution of paths. This is complicated by the fact that in any path there are an infinity of states, so that the sum of the probabilities of a state occurring at each time $t \in \mathbb{N}$ might be infinite. To resolve this, we weight the probability of a state at $t$ by $\omega(t)$ and then normalize these probabilities with $\Omega$. Let
				\begin{align}
					\mathbb{P}(s \,|\, \agentdefaut) = \frac{\sum_{t \in \mathbb{N}}\mathbb{P}(\phi_{t}(s) = s)\omega(t)}{\Omega}.
				\end{align}
			\end{remark}
			In general, this gives
			\begin{theorem}[A Condition for the Equivalence of $d_{\agentun}$ and $d_{\agentdeux}$]
				In a Markov decision process with a finite set of states, the local distances $d_{\agentun}$ and $d_{\agentdeux}$ are equivalent whenever the distributions of states in $\Phi^{\agentun}$ and $\Phi^{\agentdeux}$ are equivalent.
			\end{theorem}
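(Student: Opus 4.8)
The plan is to show that, under the finite Markov hypothesis, each local distance collapses to a \emph{finite weighted sum over states} whose weights are exactly the discounted state-visitation probabilities $\mathbb{P}(\cdot\,|\,\agentdefaut)$; once this reduction is in hand, equivalence of $d_{\agentun}$ and $d_{\agentdeux}$ follows from equivalence of those weights. First I would fix two arbitrary agents $x,y \in \agentSet$ and expand the local distance at $\agentun$. Writing $d_{\phi_{t}}(x,y) = d_{\actionSet}(x(\phi_{t}(s)), y(\phi_{t}(s)))$ for strictly Markov agents and expanding the expectation over the finitely many states gives
\begin{align}
	d_{\agentun}(x,y) = \lim_{n\to\infty}\sum_{t<n}\omega(t)\sum_{s\in S}\mathbb{P}_{\Phi^{\agentun}}(\phi_{t}(s) = s)\,d_{\actionSet}(x(s),y(s)).
\end{align}
Because $S$ is finite, $d_{\actionSet}$ is bounded, and $\sum_{t}\omega(t) = \Omega < \infty$, the double sum is absolutely convergent and I may interchange the (finite) sum over $S$ with the sum over $t$. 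Using the definition $\mathbb{P}(s\,|\,\agentun) = \Omega^{-1}\sum_{t}\omega(t)\mathbb{P}_{\Phi^{\agentun}}(\phi_{t}(s) = s)$, this yields the key reduction
\begin{align}
	d_{\agentun}(x,y) = \Omega\sum_{s\in S}\mathbb{P}(s\,|\,\agentun)\,d_{\actionSet}(x(s),y(s)),
\end{align}
together with the analogous identity for $d_{\agentdeux}$.

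Next I would read the topology off this formula. Since the right-hand side is a sum of finitely many nonnegative terms, $d_{\agentun}(x,y)$ vanishes, and more generally a sequence $x_{n}$ converges, precisely when every term indexed by a state of positive weight vanishes; hence the topology induced by $d_{\agentun}$ depends on $\Phi^{\agentun}$ only through the support $T_{\agentun} = \{s\in S \,|\, \mathbb{P}(s\,|\,\agentun) > 0\}$. On a finite state set a distribution of states is determined by its point masses, so two such distributions are equivalent (have the same null sets) exactly when they share a support. The hypothesis of the theorem is therefore precisely $T_{\agentun} = T_{\agentdeux} =: T$.

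Finally I would upgrade the shared support to an explicit comparison of the two distances. With the common support $T$, the ratios $\mathbb{P}(s\,|\,\agentdeux)/\mathbb{P}(s\,|\,\agentun)$ for $s\in T$ form a finite collection of strictly positive reals, so
\begin{align}
	c = \min_{s\in T}\frac{\mathbb{P}(s\,|\,\agentdeux)}{\mathbb{P}(s\,|\,\agentun)}, \qquad C = \max_{s\in T}\frac{\mathbb{P}(s\,|\,\agentdeux)}{\mathbb{P}(s\,|\,\agentun)}
\end{align}
are both finite and positive. Substituting term by term into the reduced formulas gives $c\,d_{\agentun}(x,y) \leq d_{\agentdeux}(x,y) \leq C\,d_{\agentun}(x,y)$ for all $x,y \in \agentSet$, a bi-Lipschitz equivalence that immediately forces $d_{\agentun}$ and $d_{\agentdeux}$ to generate the same open sets, i.e.\ to be equivalent. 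The main obstacle is conceptual rather than computational: one must notice that in the finite Markov case $d_{\agentun}$ factors through the state-visitation measure $\mathbb{P}(\cdot\,|\,\agentun)$ rather than through the full path measure $\Phi^{\agentun}$. This is exactly what reconciles the theorem with the measure-equivalence framing, since equivalence of the full path measures genuinely \emph{fails} here --- agents that route through the same states via different actions produce inequivalent path measures. Once the reduction is established, the interchange of summation and the bi-Lipschitz bound are routine.
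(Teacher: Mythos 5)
Your proposal is correct, and it is worth noting that the paper itself states this theorem \emph{without} any proof --- the surrounding text merely asserts that equivalence of state distributions suffices when ``only Markov agents are considered.'' Your argument supplies exactly the missing content, and it does so in the way the paper's informal remark gestures at: the key reduction $d_{\agentun}(x,y) = \Omega\sum_{s\in S}\mathbb{P}(s\,|\,\agentun)\,d_{\actionSet}(x(s),y(s))$ is valid (the interchange of sums is justified as you say, by finiteness of $S$, boundedness of $d_{\actionSet}$, and $\Omega < \infty$), it matches the paper's own normalized visitation measure $\mathbb{P}(\cdot\,|\,\agentdefaut)$ from the preceding remark, and the bi-Lipschitz bound $c\,d_{\agentun} \leq d_{\agentdeux} \leq C\,d_{\agentun}$ is strictly stronger than the topological equivalence the theorem claims. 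Your closing observation --- that the full path measures $\Phi^{\agentun}$ and $\Phi^{\agentdeux}$ are generally \emph{not} equivalent here, so the theorem really does hinge on $d_{\agentun}$ factoring through the state-visitation measure --- is also the right reconciliation with the measure-equivalence framing the paper sets up just before the theorem. The one thing you should state as an explicit hypothesis rather than leave implicit in a parenthetical is that the reduction requires the \emph{compared} agents $x, y$ to be strictly Markov: if $\agentSet$ contains agents that are functions of full truncated prime paths, then $d_{\phi_{t}}(x,y)$ depends on the history and not just on $\phi_{t}(s)$, the expansion over $s \in S$ fails, and the theorem is false as stated (two agents could agree on every state drawn from the visitation distribution yet be separated by $d_{\agentun}$ along histories that $\agentdeux$ never produces). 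This restriction is clearly intended by the paper's discussion, but your proof is the place where it actually does work, so it belongs in the statement of what you are proving.
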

			This theorem has several important manifestations. Let us consider the case where the local distances of all agents possible in the process are equivalent:
			\begin{lemma}[Conditions for the Equivalence of all Local Distances]
				The most general condition for the equivalence of all local distances is
				\begin{align}
					\Forall \agentdefaut \in \agentSet \Forall s \in S (\mathbb{P}(s \,|\, \agentdefaut) > 0).
				\end{align}
				There are two common conditions which are more specific but easier to verify, which may help with the application of this result. First, if every transition probability is greater than zero, then certainly the probability of each state in the distribution of paths is greater than 0:
				\begin{align}
					\Forall \phi_{t} \in \bigcup_{t \in \mathbb{N}}\Phi_{t} (\mathbb{P}(s \,|\, \sigma(\phi_{t})) > 0).
				\end{align}
				Even more specifically, but also more easy to test, if the probability of each state at the beginning of the process is non-zero, then the probability of each state in the distribution of paths is greater than~0:
				\begin{align}
					\Forall s \in S (\mathbb{P}(s \,|\, \sigma_{0})).
				\end{align}
			\end{lemma}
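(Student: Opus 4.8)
The plan is to deduce the lemma from the preceding theorem, which already reduces the equivalence of two local distances $d_{\agentun}$ and $d_{\agentdeux}$ in a finite Markov process to the equivalence of the state distributions they induce. First I would make the reduction to supports explicit: because $S$ is finite, the state distribution $\mu_{\agentdefaut}$ associated with $\Phi^{\agentdefaut}$ is a discrete measure on $2^{S}$, and a subset $X \subseteq S$ satisfies $\mu_{\agentdefaut}(X) = 0$ exactly when every $s \in X$ has $\mathbb{P}(s \mid \agentdefaut) = 0$. Consequently two such measures $\mu_{\agentun}$ and $\mu_{\agentdeux}$ are equivalent if and only if they assign positive probability to exactly the same states, i.e.\ iff they share a common support. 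This turns the measure-theoretic notion of equivalence into a statement about which states carry positive weight.

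With this observation the sufficiency of the main condition is immediate. If $\Forall \agentdefaut \in \agentSet \, \Forall s \in S \,(\mathbb{P}(s \mid \agentdefaut) > 0)$, then every $\mu_{\agentdefaut}$ has full support $S$, so the only null set of any $\mu_{\agentdefaut}$ is $\emptyset$. Hence for any pair $\agentun, \agentdeux$ the measures $\mu_{\agentun}$ and $\mu_{\agentdeux}$ share the support $S$, are therefore equivalent, and the preceding theorem yields that $d_{\agentun}$ and $d_{\agentdeux}$ are equivalent; since the pair was arbitrary, all local distances are pairwise equivalent. For the claim that this is the \emph{most general} such condition I would argue the contrapositive under the natural standing assumption that each state is reachable by at least one agent: if some agent $\agentun$ has $\mathbb{P}(s \mid \agentun) = 0$ while another agent $\agentdeux$ has $\mathbb{P}(s \mid \agentdeux) > 0$, then $\{s\}$ is null for $\mu_{\agentun}$ but not for $\mu_{\agentdeux}$, so their supports differ and the distributions are inequivalent. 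Thus full support for every agent is precisely the weakest hypothesis forcing all supports to coincide.

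Next I would verify that the two easier-to-check conditions each imply the main condition. For the transition condition, fix any $\agentdefaut$ and $s$; at every time $t \geq 1$ the state $s_{t}$ is drawn from $\sigma(\phi_{t-1})$, which by hypothesis assigns $s$ positive probability for \emph{every} realization of $\phi_{t-1}$, so averaging over $\phi_{t-1}$ gives $\mathbb{P}(\phi_{t}(s) = s) > 0$. Because $\omega(t) = \gamma^{t}$ with $0 < \gamma < 1$ gives $\omega(1) > 0$, the single positive term at $t = 1$ already makes the normalized sum $\mathbb{P}(s \mid \agentdefaut) = \frac{1}{\Omega}\sum_{t}\mathbb{P}(\phi_{t}(s) = s)\,\omega(t)$ strictly positive. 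For the initial-distribution condition, $\mathbb{P}(\phi_{0}(s) = s) = \sigma_{0}(s) > 0$ and $\omega(0) = 1$, so the $t = 0$ term alone forces $\mathbb{P}(s \mid \agentdefaut) > 0$; here only the single distribution $\sigma_{0}$ must be inspected, which is why it is the simplest to test.

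The routine steps, namely positivity of a normalized nonnegative sum once one summand is positive, and the finiteness argument identifying equivalence of discrete measures with equality of support, are mechanical. The main obstacle is interpretive rather than computational: the phrase \emph{most general} cannot mean a strict converse of the preceding theorem, since that theorem supplies only the implication (equivalent state distributions) $\Rightarrow$ (equivalent local distances). I would therefore be careful to state the minimality claim under the assumption that no state is wholly unreachable, since such a state may be deleted from $S$ without altering any distribution, and to present full support as the weakest condition that guarantees, through the theorem, that \emph{all} state distributions share a support and hence that all local distances coincide up to equivalence.
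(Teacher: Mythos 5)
The paper states this lemma without any accompanying proof: it is offered as an immediate ``manifestation'' of the preceding theorem (which is itself left unproven in the text), with the implications asserted in prose inside the lemma statement. There is therefore no argument of the authors' to compare yours against; your proposal supplies what the paper omits, and it is correct. The reduction of measure-equivalence to equality of supports for discrete measures on a finite $S$ is the right key step, the sufficiency of full support then follows from the preceding theorem exactly as you say, and the derivations of full support from the two stronger conditions are routine and handled correctly (a single positive summand at $t = 1$, respectively $t = 0$, suffices because $\omega(t) = \gamma^{t} > 0$ and $\Omega < \infty$). The most valuable part of your write-up is the care taken with the phrase \emph{most general}: since the cited theorem only supplies the implication (equivalent state distributions) $\Rightarrow$ (equivalent local distances), minimality can only be claimed relative to conditions that factor through that theorem, and only under the convention that states unreachable by every agent are pruned from $S$ --- with that convention, ``all supports coincide'' collapses to ``every support equals $S$,'' which is the stated condition. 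That caveat is genuinely needed for the lemma's first claim to be defensible, and the paper does not acknowledge it.
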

			Of course, this result is of little importance if these local distances cease to be useful for the analysis of the underlying decision process. By construction, these equivalent local distances are relevant only to Markov processes. Importantly, these local distances cannot detect differences in distributions of paths which are not caused by differences in distributions of states, or more specifically of state-action pairs. Thus, these local distances cannot guarantee the continuity all of the functions considered in \autoref{continuity}, but they \emph{do} apply to the cumulative reward functions described by \eqref{sumReward}.
		
		\subsection{Deterministic Agents}
			The theorems of \autoref{theAgentSpace} rely on \emph{stochastic} agents to justify the topology of the Agent Space. This reliance stems from the fact that we are concerned in that section primarily with distributions of \emph{paths}. Because paths consist of sequences of states \emph{and actions}, distributions of paths can only approach one another (with respect to total variation) if, in response to a single truncated prime path $\phi^{\prime}_{t}$, a distribution of actions is taken.
			
			However, this does not mean that the agent space is useless when deterministic agents are considered. For example, the distributions of states mentioned in \autoref{equivalentDistances} may be continuous even in an agent space composed entirely of deterministic agents, via the stochasticity of the state-transition function. Thus, if the set of actions is connected and for every truncated prime path $\phi^{\prime}_{t}$ the state-transition distribution is a continuous function of the final action $\action_{t}$, then the distributions of states are continuous in the agent space. Then, an immediate reward function which considers only the state would also be continuous in the agent space.
			
		\subsection{Sketch of the Novelty Search Methods of \cite{Allen2021}}\label{implementationSketch}
			In an upcoming work, we use the Agent Space in conjunction with Novelty Search to develop a distributed optimization algorithm for Reinforcement Learning problems. Our method evaluates candidate agents with a path collected by the locus agent at each training epoch, resulting in a non-stationary objective that encourages the agent to behave in ways that it has not yet behaved, on states that it can currently encounter. The following pseudo-code is a summary of this method.
			\begin{algorithm}[H]
				\caption{Strategy \& Reward Optimization Algorithm}
				\begin{algorithmic}[1]
					\For {each epoch}
					\State Set batch $\agentSet_{t} := \emptyset$
					\For {desired batch size}
					\State Set $\varepsilon \sim \mathcal{N}(0, I)$
					\State Gather a path $\phi$ and cumulative reward $R(\phi)$ with $\agentdefaut_{t} + \varepsilon$
					\State Set $N(\agentdefaut_{t} + \varepsilon) \gets \underset{0 < i \leq t}{\min} [d_{a_{t}}(\agentdefaut_{t} + \varepsilon, \agentdefaut_{t-i})]$ for prior epochs $t-i$
					\State Append $(R(\phi), N(\agentdefaut_{t} + \varepsilon))$ to $\agentSet_{t}$
					\EndFor
					\State Compute $\nabla_{\agentdefaut_{t}} G(\agentdefaut_{t}) := \nabla_{\agentdefaut_{t}} R(\agentdefaut_{t}) + \nabla_{\agentdefaut_{t}} N(\agentdefaut_{t})$ via Finite Differences with $\agentSet_{t}$.
					\State Update $\agentdefaut_{t}$ by following $\nabla_{\agentdefaut_{t}} G(\agentdefaut_{t})$
					\EndFor
				\end{algorithmic}
			\end{algorithm}
			
			We approximate $d_{\agentdefaut_{t}}(\agentdefaut_{t} + \varepsilon, \agentdefaut_{t-i})$ by evaluating candidate agents on a set of states that we gather every epoch. To do this, we follow the agent $\agentdefaut_{t}$ in the decision process until $K$ total states have been encountered, then store them in a set denoted $\zeta$. The distance $d_{\agentdefaut_{t}}(\agentdefaut_{t} + \varepsilon, \agentdefaut_{t-n})$ can then be approximated by evaluating the responses of $\agentdefaut_{t} + \varepsilon$ and $\agentdefaut_{t-i}$ on only $\zeta$, rather than by integration on $\Phi^{\agentdefaut_{t}}$.
\end{appendices}
\fi
\end{document}